\definecolor{cvprblue}{rgb}{0.21,0.49,0.74}
\newtheorem{thm}{Theorem}
\newtheorem{defn}{Definition}
\title{On Transfer in Classification: How Well do Subsets of Classes Generalize?}
\author{Raphael Baena, Lucas Drumetz, Vincent Gripon \\
IMT Atlantique, Lab-STICC, UMR CNRS 6285, F-29238, France\\
{\tt\small firstname.surname@imt-atlantique.fr}
}
\begin{document}
\maketitle
\begin{abstract}
          In classification, it is usual to observe that models trained on a given set of classes can generalize to previously unseen ones, suggesting the ability to learn beyond the initial task. This ability is often leveraged in the context of transfer learning where a pretrained model can be used to process new classes, with or without fine tuning. Surprisingly, there are a few papers looking at the theoretical roots beyond this phenomenon. In this work, we are interested in laying the foundations of such a theoretical framework for transferability between sets of classes. Namely, we establish a partially ordered set of subsets of classes. This tool allows to represent which subset of classes can generalize to others. In a more practical setting, we explore the ability of our framework to predict which subset of classes can lead to the best performance when testing on all of them. We also explore few-shot learning, where transfer is the golden standard. Our work contributes to better understanding of transfer mechanics and model generalization.
\end{abstract}

In recent years, \emph{transfer learning} -- the ability of a model to leverage knowledge gained from one task and apply it to another -- has gained a lot of attention, particularly with the rise of foundation models and few-shot settings. We argue that transfer can be thought of as a form of \emph{generalization} from a task to others, and as such can be studied with a mathematically sound framework. Therefore, the primary objectives of this paper are to provide a precise definition of transfer learning within the context of classification and to leverage it for investigating fundamental questions in the field, such as which classes generalize to which others.

Transfer learning has been of interest for a while~\cite{caruana1997multitask,hintonimagenet,transfersurvey}, starting with domain adapation~\cite{domainadaptation}. But, recently its potential surged due to self-supervised learning techniques enabling versatile representations~\cite{selfsupervised,vit8} which can be employed on a large range of problems. With abundant data and computational resources, the foundation models emerged, transforming the field~\cite{transfersurvey2}. Nowadays, researchers routinely use pretrained models like CLIP~\cite{clip} and Vision Transformer VIT-8~\cite{vit8} for specific tasks.

Interestingly, in some cases, fine-tuning these models is not even necessary, as they can serve directly as effective feature extractors~\cite{hintonimagenet}. In classification setting, the direct application of metric learning on the features, makes it possible to classify new sets of classes ($C'$), expanding the classification abilities beyond the original classes the models were trained on ($C$)~\cite{featuretransfer,nncfewshot}. Few-shot learning, for example, involves tasks where the training and target domains do not overlap~\cite{fewshot,fewshotsurvey}, at least in terms of semantics.

Despite the widespread success of transfer learning, the underlying mechanisms that enable this ability remain largely unclear and, to our knowledge, insufficiently formulated. Several authors have proposed definitions for fundamental concepts such as \emph{task}, \emph{task-domain}, and the different kinds of transfer, like feature-based, instance-based, and parameter-based~\cite{transfersurvey,transfersurvey2}. Other works have focused on deriving theoretical bounds on convergence~\cite{theorytransferactive,theorytransfertaskdiversity}.

\begin{figure*}
 \centering

 \begin{subfigure}{0.45\textwidth}
  \centering
  \begin{tikzpicture}[
  scale =0.8,
  node distance=3cm,
  transform shape,
  state/.style={circle, draw, minimum size=2cm, text width=2cm, align=center},
  bluestate/.style={state, fill=blue!40},
  redstate/.style={state, fill=red!40},
  greenstate/.style={state, fill=green!40},
  >=latex,
  ]
  \node[state] (dog_truck) {dog/truck};
  \node[bluestate, right of=dog_truck] (cat_dog) {cat/dog};
  \node[redstate, left of=dog_truck] (auto_cat) {auto/cat};
  \node[redstate, above of=auto_cat] (cat_truck) {cat/truck};
  \node[state, above of=dog_truck] (auto_dog) {auto/dog};
  \node[greenstate, right of=auto_dog] (auto_truck) {auto/truck};
  
  \draw[<->] (auto_cat) to node {} (cat_truck);
  \draw[->] (auto_cat) to node {} (auto_dog);
  \draw[->] (cat_truck) to node {} (auto_dog);
  \draw[->] (auto_cat) to node {} (dog_truck);
  \draw[->] (cat_truck) to node {} (dog_truck);
  \draw[->] (cat_dog) to node {} (dog_truck);
\end{tikzpicture}
  \label{subfig:tikz_graph_1}
 \end{subfigure}
 \hfill
 \begin{subfigure}{0.45\textwidth}
  \centering
\begin{tikzpicture}[
  scale =0.8,
  node distance=3cm,
  transform shape,
  state/.style={circle, draw, minimum size=2cm, text width=2cm, align=center},
  redstate/.style={state, fill=red!40},
  bluestate/.style={state, fill=blue!40}, 
  greenstate/.style={state, fill=green!40},
  >=latex,
  ]
  \node[redstate] (shirt_bag) {shirt/bag};
  \node[state, right of=shirt_bag] (shirt_trouser) {shirt/trouser};
  \node[state, left of=shirt_bag] (coat_trouser) {coat/trouser};
  \node[bluestate, above of=coat_trouser] (shirt_coat) {shirt/coat};
  \node[redstate, above of=shirt_bag] (coat_bag) {coat/bag};
  \node[greenstate, right of=coat_bag] (trouser_bag) {trouser/bag};

  \draw[<->] (coat_bag) to node {} (shirt_bag);
  \draw[->] (coat_bag) to node {} (shirt_trouser);
  \draw[->] (shirt_bag) to node {} (shirt_trouser);
  \draw[->] (coat_bag) to node {} (coat_trouser);
  \draw[->] (shirt_bag) to node {} (coat_trouser);
\end{tikzpicture}
  \label{subfig:tikz_graph_2}
 \end{subfigure}
 
 \caption{Hasse Diagrams illustrating the order relationship of models learned on pairs of classes. Each model is characterized by the pairs of classes it can successfully separate. Arrows from one model $A$ to another $B$ indicate that $A$ is more expressive and can separate all the pairs that $B$ can separate. The colored models are the most expressive and are called fundamental. Models sharing the same colors are considered equivalent as they can separate the same pairs. Uncolored models could be disregarded as the corresponding pairs can be separated with another model. On the left, we consider 4 classes from CIFAR-10, and the diagram shows that `cat/truck' and `auto/cat' are equivalent and more expressive than `auto/dog' and `dog/truck'. In other words, models `auto/dog' and `dog/truck' can be disregarded as they offer no additional separability compared to `cat/truck' and `auto/car. On the right, considering 4 classes from FASHION-MNIST, the diagram demonstrates that `coat/bag' and `shirt/bag' are equivalent and more expressive than `coat/trouser' and `shirt/trouser'. }
 \label{fig:hasse}
\end{figure*}
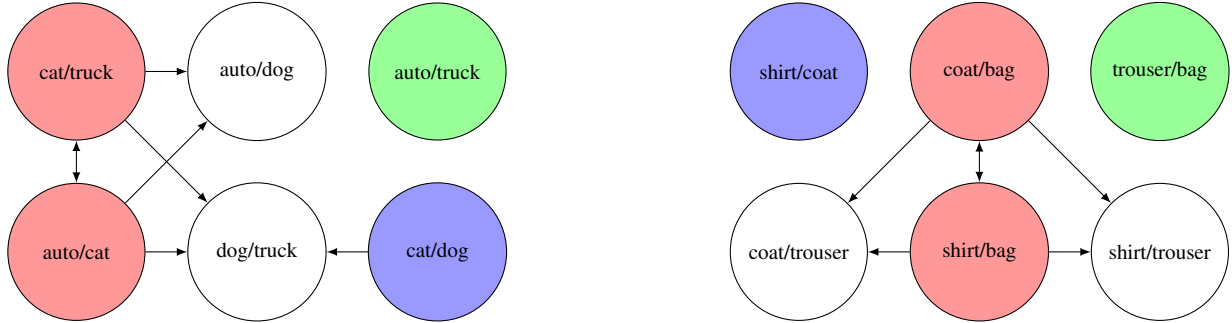

While these existing approaches offer valuable insights, our work complements them by introducing new perspectives with simple yet meaningful definitions. We propose a novel theoretical model that establishes an order relationship of learning, conveniently represented by a Hasse Diagram as shown in Figure~\ref{fig:hasse}. We present a comprehensive theoretical framework, and metric called $separability$, for characterizing the transferability of models trained on one set of classes ($C$) to another set of classes ($C'$).
This framework serves as a tool for investigating fundamental questions related to learning, such as:\\

\begin{enumerate}
\item Can we identify informative indicators of how well a model trained on one set of classes ($C$) will perform on another set of classes ($C'$)?
\item Which classes play the most crucial role in the learning process when fine-tuning a model? Which set of classes leads to the best generalization and transferability?
\end{enumerate}

To address these fundamental questions and demonstrate the utility of our approach, we conduct experiments in various scenarios, offering empirical evidence and insights into the transferability of deep learning models.

\section{Related Work}
\paragraph{Foundation Models:}
The remarkable revival of neural network-based machine learning around 2010~\cite{fondation}, fueled by large datasets and architectures with substantial parameters~\cite{deeplearningnature,imagenet}, led to the realization that these trained models could be easily adapted to various domains and classes~\cite{imagenet}. Subsequently, these pre-trained models evolved into even larger foundation models~\cite{fondation}, trained on surrogate tasks and capable of effective knowledge transfer across diverse tasks and domains~\cite{CLIPfewshot,objectdetection,denseclip}. This paradigm shift, known as homogenization, encouraged most researchers to adopt the same pre-trained models, such as BERT~\cite{bert}, GPT~\cite{GPT}, and CLIP~\cite{clip}, which are now standard benchmarks in the field. 

On the paper, foundation models can indeed be thought of as a surrogate for transfer learning: when learning on a large, representative universal dataset, there is no more need to adapt to any different domain. Yet, recent papers have shown the merits of transfer learning even in the case where the downstream task is a fine-grain refinement of the initial domain~\cite{baena2022preserving,hierchicalloss}. These recent advancements drive us to create a practical theoretical framework to investigate and understand transferability in classification settings.

\paragraph{Transfer Learning:} At a technical level, the generalization ability of foundation models relies on transfer learning~\cite{caruana1997multitask,transferimagenet}.
First, the models are trained to solve specific tasks like classification, using labeled datasets that represent particular data domains. Then the challenge lies in extending their capabilities to tackle other tasks or adapt to different domains. Initially, transfer learning was limited to domain adaptation~\cite{caruana1997multitask}. However, with the advent of deep learning, the concept of transfer has been extended to encompass changes in both the task and the data domain~\cite{transfersurvey,transfersurvey2,selfsupervised}. Transfer learning aims to leverage knowledge obtained from a source task and dataset, even if they are not strongly related to the target task or dataset, in order to reduce the learning cost and solve the target task. It is particularly useful when acquiring a large amount of labeled data is costly/impossible, or to avoid overfitting.

Transfer learning can be categorized into three main types: transductive, inductive, and unsupervised~\cite{transfersurvey,transfersurvey2}. Transductive transfer learning occurs when only the source data is labeled, while inductive transfer learning involves both the source and target data being labeled. Unsupervised deep transfer learning involves unlabeled data.

Furthermore, transfer learning can be decomposed into four approaches according to~\cite{transfersurvey,transfersurvey2}, instance-based (weighting samples)\cite{instanceweighting} , feature-based or mapping-based~\cite{featuretransfer}, parameter-based or model-based~\cite{parameters}, and relational-based or adversarial-based approaches~\cite{davis2009deep}.

Our paper focuses on feature-based methods. Indeed, recent research demonstrates that deep learning models can acquire highly generalized representations, which can be directly employed as feature extractors to solve diverse tasks without requiring fine-tuning \cite{featuretransfer,nncfewshot,selfsupervised}.

\paragraph{Few-Shot Learning:} Few-Shot Learning~\cite{fewshot} is a highly relevant research area closely tied to the theoretical framework proposed in this paper. The goal is to learn to recognize new classes (\textit{novel classes}) with only a few labeled examples (shots) per class, even when the model was originally trained on a different set of classes (\textit{base classe}s). This corresponds to a realistic setting where obtaining a large amount of labeled data for each new class might be impractical. Few-shot learning can be approached in various ways, and one common methods are feature-based~\cite{fewshotsurvey}. The model, initially trained on \textit{base classes}, serves as a feature extractor for \textit{novel class} samples. A new classifier, often a nearest mean classifier (NCM)~\cite{nncfewshot}, is then applied on these features to predict novel classes.

\paragraph{Exploring Transferability:}The emergence of Self-Supervised Learning techniques has played a crucial role in achieving good transferability of features~\cite{selfsupervised,selfsupervisedsurvey}. Self-Supervised Learning (SSL) involves training models on pretext tasks that do not require human-labeled annotations, allowing the models to learn meaningful representations of the data. Therefore, in this paper, we delve into the investigation of transfer learning using a Vision Transformer Dino VIT-8~\cite{vit8} pretrained on a large dataset using SSL method, which is a standard benchmark in the field.

Advancements in transfer learning have driven several authors to investigate fundamental questions within the domain. In~\cite{taskatanov}, the authors propose a method called \textit{Task Discovery} to uncover tasks (subsets of classes) on which a trained network can generalize well. This method involves computing similarity scores where higher similarity suggest that networks are likely to converge to stable solutions. The researchers demonstrate that using human labels leads to better generalization compared to random labels, enabling the identification of tasks or sets of classes that result in optimal generalization in term of error rate. However, their approach differs from our work, as we provide a theoretical framework, whereas they focus on identifying subsets of classes leading to stable solutions.

Noticeable theoretical works in transfer learning include~\cite{transfersurvey,transfersurvey2}, where the authors categorize different transfer learning approaches and provide definitions for fundamental concepts such as tasks and data domains. Theoretical bounds on convergence rates for specific cases, such as active learning, have also been derived ~\cite{theorytransferactive,theorytransfertaskdiversity}.

\section{Proposed Theoretical Framework}
In this section, we propose a novel theoretical model in the context of classification, where the primary objective is to separate each class from others. First, we consider that classes are separated by models that act as bi-partitions. For instance, a hyperplane can effectively split a space into two regions, where ideally, one class resides on one side, and the other class on the opposite side. While modern deep networks often use a one-vs-all approach for class separation, we take a more fundamental perspective: we consider classes to be separated only when each pair of classes is effectively separated. In essence, we view the problem of classification as a partitioning problem, where for each pair of classes, there must be at least one partition in which one class belongs to one set and the other class to the other set. By adopting this viewpoint, it appears that some bipartitions, or models, are more expressive than others. This leads us to introduce an ordering relationship among models.

\subsection{General Case}Let us consider a space $E$ and a set of $n$ disjoints subsets of $E$ denoted $C$. The elements of $C$ are called classes. We introduce a model as a bipartition of the space $E$, separating two classes when one class belongs to one partition and the other class belongs to the other partition.
\begin{defn}[Model]
 A model $M = \{M^1, M^2\} $ is a bipartition of the space $E$, i.e, $E = M^2 = E\setminus M^1$.

Let us say that $M$ separates $I,J \in C$, if: ($I \cap M^1 = \emptyset) \land (J \cap M^2 = \emptyset)$ and we note $I \perp_M J$.
\end{defn}

In practice, the models are inferred from data. Here, we assume that for each pair of classes, we learn a specific model that separates that pair; otherwise, the problem would not be separable. Therefore, each model is associated with a specific pair of classes on which it was learned. 

\begin{defn}[Model associated with a pair of classes]
Let us denote $M_{I,J}$ a model that separates $I$ and $J$.
\end{defn}
 The way we obtain such a model is voluntarily not explicit right now but will be clarified in the methodology section. It is possible that some models separate more than the pair with which they are defined. Therefore, we introduce for each model the set of all pairs it separates.
\begin{defn}[Set of separable pairs]
 Let us note $\mathcal{C}(M)$ the set of pairs of elements in $C$ that are separated by $M$.
\end{defn}

From this definition, it is evident that some models are more expressive than others. For instance, a model $M$ may separate all the pairs that a model $M'$ separates, which naturally leads us to define the following order relationship.
 
\begin{defn}[Ordering models]
 Let us consider two models $M,M'$. The model $M$ is said to be less expressive than the model $M'$ if: $\mathcal{C}(M) \subseteq \mathcal{C}(M')$, and we use the notation $M \leq M'$. The relation $ \leq $ is an order relationship.
\end{defn}
Some models separate exactly the same set of pairs and are said to be equivalent. We use the following notation.
 \begin{defn}[Equivalent models]
 Two models $M,M'$ are equivalent if $M \leq M'$ and $M' \leq M$, denoted $M \equiv M'$.
\end{defn}
The previous order relationship can be conveniently represented using Hasse Diagrams, providing a visual representation of the partial order among models. In the Hasse Diagram, each model is represented as a node, and an arrow from $M'$ to $M$ indicates that $M \leq M'$. The Hasse Diagram allows us to identify the most expressive models at the top and the least expressive models at the bottom. Less expressive models can be disregarded since the pairs they separate can be distinguished using other models. 

Figure~\ref{fig:hasse} depicts the Hasse diagrams of two different datasets (restricted to 4 classes). Such diagrams were obtained by following our methodology section. More precisely, we computed one model for each pair of classes. Then for each model we determined the set of pairs they separate (See matrix $\mathbf{S}$ in Algorithm~1).

Now, we are interested in the most expressive models, those that cannot be learned otherwise except through an equivalent model. The pairs on which they are learned are called fundamental since they cannot be removed. They correspond to the maximal elements of the order.

\begin{defn}[Fundamental pair]
 A pair of classes $\{I,J\}$ is said to be fundamental if $I \perp_{M_{K,L}} J \Rightarrow M_{I,J} \equiv M_{K,L}$.
\end{defn}
Of particular interest is the characterization of the minimum number of pairs needed to separate all the classes. This is termed the fundamental number, as fewer pairs cannot suffice to learn all essential models.
\begin{defn}[Fundamental number]
The fundamental number $\mathcal{F}(C)$ is defined as the minimum cardinal of a set $S$ of models such that $\forall I,J \in C, \exists M \in S, I \perp_M J$. Note that by definition we have an upper bound: $\mathcal{F}(C) \leq \binom{n}{2}$.
\end{defn}
Instead of characterizing the fundamental number by pairs, we can use models as proved by the following theorem. It is a way to see the fundamental number as the minimal number of models needed to separate all the pairs. 
\begin{thm}\label{thm:fundamentalpairs}
Consider the set of all fundamental pairs. Let us remove pairs until this set contains only the ones whose associated models are not equivalent to one another. Then the cardinality of this set is noted $\mathcal{F}(C)$.
\end{thm}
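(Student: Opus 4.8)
Write $P$ for the set of all pairs of classes, and for a pair $p=\{I,J\}$ abbreviate its associated model $M_{I,J}$ by $M_{p}$; recall that $p\in\mathcal{C}(M_{p})$ by construction. Unwinding the definition, $\mathcal{F}(C)$ is the least size of a family $S$ of models (which, as the bound $\mathcal{F}(C)\le\binom{n}{2}$ indicates, we may take among the $M_{p}$, $p\in P$) such that $\bigcup_{M\in S}\mathcal{C}(M)=P$, i.e.\ a minimum ``cover'' of $P$ by separable sets. Let $\widetilde{F}$ be a set obtained from the fundamental pairs by deleting pairs until the remaining associated models are pairwise non-equivalent; then $|\widetilde{F}|$ is exactly the number of $\equiv$-classes among $\{M_{p}:p\text{ fundamental}\}$, and this number is independent of the deletions made, since equivalent models have the same separable set. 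The plan is to prove $\mathcal{F}(C)=|\widetilde{F}|$ by the two inequalities below.

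\textbf{Lower bound $\mathcal{F}(C)\ge|\widetilde{F}|$.} Let $S$ be a minimum cover. For each fundamental pair $p=\{I,J\}$, since $S$ separates every pair there is some $M=M_{K,L}\in S$ with $I\perp_M J$; the defining property of a fundamental pair then forces $M_{K,L}\equiv M_{I,J}=M_{p}$. Fix one such $M\in S$ for every $p\in\widetilde{F}$. The resulting assignment $p\mapsto M$ is injective: if $p\ne p'$ in $\widetilde{F}$ were sent to the same $M$, then $M_{p}\equiv M\equiv M_{p'}$, contradicting that the models attached to $\widetilde{F}$ are pairwise non-equivalent. Hence $|S|\ge|\widetilde{F}|$.

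\textbf{Upper bound $\mathcal{F}(C)\le|\widetilde{F}|$.} It suffices that $S^{\star}=\{M_{p}:p\in\widetilde{F}\}$ be a cover, since then $\mathcal{F}(C)\le|S^{\star}|=|\widetilde{F}|$; as equivalent models separate the same pairs, this amounts to showing every $q\in P$ lies in $\mathcal{C}(M_{p})$ for some fundamental pair $p$. Note first the converse fact, which is immediate: if $p$ is fundamental and $M_{p}\le M_{r}$ then $p\in\mathcal{C}(M_{p})\subseteq\mathcal{C}(M_{r})$, so $M_{r}$ separates the two classes of $p$ and hence $M_{r}\equiv M_{p}$; thus fundamental pairs already have $\le$-maximal associated models. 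To produce the desired $p$ for a given $q$, write $M<M'$ for $M\le M'$ with $M\not\equiv M'$ and, starting from $M_{q}$ (which separates $q$), repeatedly replace the current pair-model $M_{r}$ by some $M_{r'}$ with $M_{r}<M_{r'}$ whenever one exists; then $q\in\mathcal{C}(M_{r})\subsetneq\mathcal{C}(M_{r'})$, so every model along the way still separates $q$, and since $|\mathcal{C}(\cdot)|$ strictly increases along a finite ground set the process stops at a $\le$-maximal pair-model $M_{p^{\star}}$ that still separates $q$. It then remains to check that $p^{\star}$ is fundamental.

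\textbf{Main obstacle.} The crux is precisely that last check — the reverse implication to the one noted above, namely that a $\le$-maximal pair-model is the associated model of a fundamental pair. Maximality only says that no pair-model separates strictly more pairs than $M_{p^{\star}}$, whereas fundamentality of $p^{\star}=\{I,J\}$ demands that \emph{every} pair-model separating $I$ from $J$ be equivalent to $M_{p^{\star}}$, and a priori such a model could be $\le$-incomparable to $M_{p^{\star}}$. Bridging this gap is the delicate step, and I expect it to hinge on the as-yet-unspecified way the models are constructed (the methodology section) — for instance, that among all models separating a given pair of classes, the one associated with that pair is the most expressive. With that structural input the $\le$-maximal and fundamental pair-models coincide, $S^{\star}$ covers $P$, and combined with the lower bound this yields $\mathcal{F}(C)=|\widetilde{F}|$.
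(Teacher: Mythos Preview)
Your two-inequality strategy is the paper's. The lower bound is exactly its first paragraph, argued more cleanly: map each $p\in\widetilde F$ to some $M\in S$ separating it, invoke fundamentality to get $M\equiv M_p$, and read off injectivity from pairwise non-equivalence in $\widetilde F$. For the reverse inequality the paper takes a slightly different route --- it shows each $M$ in a minimum cover $S$ separates exactly one element of $\widetilde F$, yielding an injection $S\hookrightarrow\widetilde F$ --- but the heart of that argument is the bare assertion ``for any element $c\in\mathcal C(M)$, there is a fundamental pair $p\in P$ s.t.\ $c<p$'', i.e.\ that every non-fundamental pair is dominated by some fundamental one. This is exactly the step you isolate as the main obstacle.

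You are right to flag it: the paper does not prove it, and without further hypotheses it can fail. Take three singleton classes $1,2,3$ in a three-point space and choose $M_{1,2}=(\{2\},\{1,3\})$, $M_{2,3}=(\{3\},\{1,2\})$, $M_{1,3}=(\{1\},\{2,3\})$; then $\mathcal C(M_{1,2})=\{\{1,2\},\{2,3\}\}$ and cyclically, the three pair-models are pairwise $\le$-incomparable, \emph{no} pair is fundamental, yet $\mathcal F(C)=2$. So your guarded conclusion is the correct one: an additional structural assumption on how the $M_{I,J}$ are selected is needed. Your suggested fix --- that $M_{I,J}$ be maximally expressive among models separating $\{I,J\}$ --- does the job: it gives $M_{K,L}\le M_{I,J}$ whenever $M_{K,L}$ separates $\{I,J\}$, and then symmetrically $M_{I,J}\le M_{K,L}$ (since $\{K,L\}\in\mathcal C(M_{K,L})\subseteq\mathcal C(M_{I,J})$), so every pair becomes fundamental and both your cover argument and the paper's minimality argument go through.
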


Definition. 7 showed there is an upper bound to $\mathcal{F}(C)$. It turns out $\mathcal{F}(C)$ can also be lower bounded. Notably, both of these bounds can be attained by at least one example.
\begin{thm}
 Let $n \geq 2$. Then the fundamental number $\mathcal{F}(C)$ is bounded as follows: 
 \begin{align*}
  \log_2(n) \leq \mathcal{F}(C) \leq \binom{n}{2}
 \end{align*}
 Both bounds are achieved by at least one example.
\end{thm}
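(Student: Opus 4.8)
The plan is to establish the two inequalities first and then supply one extremal example for each bound. The left inequality is the substantial one: observe that a set of models $S=\{M_1,\dots,M_k\}$ that separates every pair is exactly an edge-cover of the complete graph on $C$ by complete bipartite subgraphs (model $M_i$ covers precisely the pairs between the classes lying entirely in $M_i^1$ and those lying entirely in $M_i^2$), so the claim amounts to the classical fact that such a cover needs at least $\log_2 n$ pieces. I would prove it directly: take $S$ with $k=\mathcal{F}(C)$ and assign to each class $I\in C$ the signature $T(I)=\{i\in\{1,\dots,k\}:I\subseteq M_i^1\}$. The key step is that $I\mapsto T(I)$ is injective. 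If $I\neq J$ had $T(I)=T(J)$, pick $M_i\in S$ separating $\{I,J\}$; then one of the classes, say $I$, is contained in $M_i^1$ and the other, $J$, in $M_i^2$, so $i\in T(I)=T(J)$ forces $J\subseteq M_i^1$, contradicting $J\subseteq M_i^2$ since $M_i^1\cap M_i^2=\emptyset$ and classes are nonempty. Injectivity gives $n\le 2^k$, i.e.\ $\mathcal{F}(C)\ge\log_2 n$. The upper inequality $\mathcal{F}(C)\le\binom{n}{2}$ was already recorded after Definition~7, witnessed by the family $\{M_{I,J}\}$ of per-pair models.

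For attaining the lower bound I would set $n=2^k$ and realize the classes as the $2^k$ singletons of $E=\{0,1\}^k$; for each coordinate $i$ let $B_i$ be the bipartition $(\{z:z_i=0\},\{z:z_i=1\})$. Distinct binary words differ somewhere, so $B_1,\dots,B_k$ jointly separate all pairs; taking each learned model $M_{I,J}$ to be a $B_i$ witnessing a coordinate where $I$ and $J$ differ produces a separating set of size $k$, hence $\mathcal{F}(C)\le k=\log_2 n$, and the lower bound forces equality. (Since $\mathcal{F}(C)\in\mathbb{N}$, the value $\log_2 n$ is attained literally only when $n$ is a power of two — in general $\lceil\log_2 n\rceil$ is reached — which is enough to say the bound is achieved.)

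For attaining the upper bound I would make every class a doubleton $C_i=\{a_i,b_i\}$ inside $E=\{a_1,b_1,\dots,a_n,b_n\}$, and for the pair $\{i,j\}$ define $M_{i,j}$ to place $C_i$ together with all $a_k$, $k\notin\{i,j\}$, on one side and $C_j$ together with all $b_k$, $k\notin\{i,j\}$, on the other. Then $M_{i,j}$ separates $\{i,j\}$, but each remaining class $C_k$ has $a_k$ on one side and $b_k$ on the other, so it is split and $M_{i,j}$ separates it from nothing; hence $\mathcal{C}(M_{i,j})=\{\{i,j\}\}$. Since no model of the family separates two distinct pairs, any subfamily separating all pairs must contain all $\binom{n}{2}$ of them, so $\mathcal{F}(C)=\binom{n}{2}$; equivalently all $\binom{n}{2}$ pairs are fundamental and pairwise non-equivalent, and Theorem~\ref{thm:fundamentalpairs} yields the same count.

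I expect the main obstacle to be the upper-bound example, not the inequalities. One must be careful that the set $S$ in Definition~7 is understood to range over the learned models $M_{I,J}$ and not over arbitrary bipartitions of $E$: if arbitrary bipartitions were allowed, a single partition could separate many pairs at once and the fundamental number of the doubleton construction would drop to $\lceil\log_2 n\rceil$, so the value $\binom{n}{2}$ would never be attained. One also has to verify the straddling claim carefully and check its consistency with Theorem~\ref{thm:fundamentalpairs}. By contrast, once the signature $T(\cdot)$ is written down the lower bound is immediate.
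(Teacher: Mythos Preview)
Your proof is correct and follows essentially the same route as the paper: the binary-encoding/signature argument for the lower bound (the paper phrases it via Kraft--McMillan after first moving any straddling class entirely to one side, whereas you sidestep that normalization by defining $T(\cdot)$ directly), and the identical hypercube and doubleton constructions for the two extremal examples. Your caveat that Definition~7 must be read with $S$ ranging over the learned models $M_{I,J}$ rather than arbitrary bipartitions is exactly the interpretation the paper relies on for its upper-bound example to give $\mathcal{F}(C)=\binom{n}{2}$.
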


\begin{proof}
 Let us proof the lower bound. We consider $n \geq 2$, we want to distribute the pairs of $C$ on a minimal number of models $\mathcal{F}(C)$. First, we note that in any case, the minimum can be achieved when all elements of $C$ belong to one partition of each model: if one element $X$ does not belong to any partition of a model $M$ it can be added without consequences to one of the partitions $M^1,M^2$.

 By doing so, the models describe all elements of $C$ which can be represented by a binary encoding: $0$ for element included in $M^1$ and $1$ for element included in $M^2$.  The problem is then equivalent to the encoding of a source alphabet $C$ into a decodable code over an binary alphabet. Then according to the Kraft-McMillan inequality, the following bound holds: $\mathcal{F}(C) \geq \log_2(n)$ \cite{kraft1949device}.
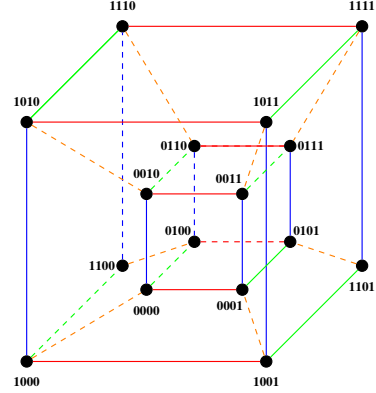
\begin{figure}
 \centering
\resizebox{0.28\textwidth}{!}{
\begin{tikzpicture}[
 line width=0.6pt,
 every node/.style={circle, draw, fill, minimum size=6pt, inner sep=0pt, font=\scriptsize\bfseries}]

 \pgfsetxvec{\pgfpoint{0.9cm}{0.0cm}}
 \pgfsetyvec{\pgfpoint{0.0cm}{0.9cm}}

 \foreach \point / \id / \angle in {
  (0,0)/1000/270,
  (0,5)/1010/90, 
  (5,0)/1001/270, 
  (5,5)/1011/90, 
  (2,2)/1100/180, 
  (2,7)/1110/90, 
  (7,2)/1101/270,
  (7,7)/1111/90, 
  (2.5,1.5)/0000/270, 
  (2.5,3.5)/0010/90, 
  (4.5,1.5)/0001/250, 
  (4.5,3.5)/0011/120, 
  (3.5,2.5)/0100/170, 
  (3.5,4.5)/0110/180, 
  (5.5,2.5)/0101/10, 
  (5.5,4.5)/0111/0} 
 {
  \node (\id) at \point [label=\angle:\id] {};
 }

 \path[draw=blue] 
 (1010) edge (1000)
 (1011) edge (1001)
 (1111) edge (1101)
 (0010) edge (0000) 
 (0011) edge (0001)
 (0101) edge (0111)
 ;
 \path[dashed,draw=blue]
 (1100) edge (1110)
 (0110) edge (0100);
 
 \path[draw=red] 
 (0010) edge (0011)
 (0000) edge (0001)
 (0110) edge (0111)
 (1010) edge (1011)
 (1000) edge (1001)
 (1110) edge (1111);
 \path[dashed,draw=red]
 (0110) edge (0111)
 (0100) edge (0101);

 \path[draw=green] 
 (0001) edge (0101)
 (1001) edge (1101)
 (1010) edge (1110)
 (1010) edge (1110)
 (1011) edge (1111);
 \path[dashed,draw=green]
 (1000) edge (1100)
 (0000) edge (0100)
 (0010) edge (0110)
 (0011) edge (0111);

 \path[dashed,draw=orange]
 (1000) edge (0000)
 (1001) edge (0001)
 (1101) edge (0101)
 (1111) edge (0111)
 (1100) edge (0100)
 (1010) edge (0010)
 (1110) edge (0110)
 (1011) edge (0011);
 
\end{tikzpicture}}
 \caption{Encoding of 16 classes with a minimal number of bi-partitions (models): $log_2(16) = 4$. Pairs/edges which share the same color correspond to equivalent models. }
 \label{fig:hypercube}
\end{figure}

\end{proof}
In the appendix, we provide one example that reaches the upper bound and another example that reaches the lower bound. Notably, the lower bound is achieved by the hypercube configuration, as illustrated in Figure~\ref{fig:hypercube}.

\subsection{Hyperplanes}
The previous definitions provided a general framework for understanding the problem of class separation, but they did not take into account the specific geometry of the classes. To make the definitions more practical and applicable to real-world scenarios, we propose a refinement. 

Consider a space $E = \mathbb{R}^d$ of dimension $d$, and $n$ classes defined as independent $\mathbf{C_i}$ in $E$, each following its own probability density function $\mathrm{p}_i$ and $\varepsilon > 0$ be a scalar value. Now, we will restrict to models that consist of affine hyperplanes with an error tolerance of $\varepsilon$ (as defined in Definition~\ref{def:sep} below). This refinement allows us to introduce a level of practicality in our model by considering data points in a continuous space without having to restrict to separable classes. The use of affine hyperplanes with a controlled error $\varepsilon$ ensures that our models can handle some level of noise and inexactness while separating the classes effectively. 

\begin{defn}[Affine hyperplane]
 An affine hyperplane $h$ is an subset of $\mathbb{R}^n$ described with a single affine equation: $
  \langle \mathbf{w}, \mathbf{x} \rangle + b = 0 $ where $\mathbf{w}$ is a nonzero vector, and $b$ a scalar.\\
 Such a hyperplane defines a model $H$ of $E$ with the associated bipartion $M^1 = \{x \in E \vert \langle \mathbf{w}, \mathbf{x} \rangle + b \geq 0\}$ and $M^2 = \{x \in E \vert \langle \mathbf{w}, \mathbf{x} \rangle + b <0 \}.$
\end{defn}

In~\cite{lowerbound}, the authors elegantly proved that $n$ points in $\mathbb{R}^d$ can be separated by at most $\lceil (n-2^{\lceil \log(d)\rceil})/d + \lceil \log(d) \rceil\rceil$ hyperplanes. Using the same definitions than before we have a bound on $\mathcal{F}(C)$ in the specific case where classes are single points. The results can be easily extended to unit balls; however, when dealing with non-convex geometries the interface between the sets can become arbitrarily complex, making it challenging to establish meaningful bounds on the number of hyperplanes required as noted in~\cite{geometrybound}.

When dealing with random variables, new definitions are necessary to define separability, as in real cases, hyperplanes may not be capable of perfectly separating classes.

\begin{defn}[Epsilon-Separability] Let us consider two random variables $\mathbf{C_i} , \mathbf{C_j}$, and a hyperplane $H$. Without loss of generality, let us assume that $\mathrm{P}\left[ \langle \mathbf{w}, \mathbf{C_i} \rangle + b \geq 0 \right] \geq 0.5$. We define the probability error of the hyperplane $H$ as:
\begin{align*}
\mathrm{P}_{err}(H,\mathbf{C_i},\mathbf{C_j}) &= \frac{1}{2} \left( \mathrm{P}_i\left[ \langle \mathbf{w}, \mathbf{C_i} \rangle + b < 0 \right] \right. \\
&\left.+ \mathrm{P}_j\left[ \langle \mathbf{w}, \mathbf{C_j} \rangle + b \geq 0 \right] \right)
\end{align*}
 $H$ separates $\mathbf{C_i},\mathbf{C_j}$ if: $\mathrm{P}_{err} < \varepsilon$ and we note $ \mathbf{C_i} \perp_H \mathbf{C_j}.$
\label{def:sep}
\end{defn}

Let us apply this definition to two real-valued random variables following a normal distribution with unit variance: $\mathbf{\mathbf{C_i}} \sim \mathcal{N}(\mu_i ,1), \ C_j \sim \mathcal{N}(\mu_j ,1)$. Assuming $\mu_j > \mu_i$, the probability error of a hyperplane $h$ is:
\begin{align*}
 &\mathrm{P}_{err}(h,i,j) = \mathrm{P}_i\left[ C_i < - b/w \right] + 1 - \mathrm{P}_j\left[ C_j \leq -b/w \right] \\
 &= \frac{1}{4}\left( 2 +\left(erf\left( -\frac{b/w + \mu_2}{\sqrt{2}} \right) -erf\left( -\frac{b/w + \mu_1}{\sqrt{2}} \right) \right)\right).
\end{align*}

Note that the error is minimal when: $b/w + \mu_1 = b/w - \mu_2$ i.e $ b/w = (\mu_1 + \mu_2)/2$ \cite{bishop2006pattern}.\\

\begin{defn}[Empirical Probability of Error]
In practical scenarios, exact probability density functions are often unavailable, and instead, they are approximated empirically based on a given set of data points. Let us consider a hyperplane $h$, and two classes $i$ and $j$. Without loss of generality, we assume that samples $\mathbf{x}^i$
belonging to class $i$ are correctly classified if $h(\mathbf{x}^i)>0$, and conversely, samples $\mathbf{x}^j$
 belonging to class $j$ are correctly classified if $h(\mathbf{x}^i)\leq 0$. Let $\mathbf{X_i} = \{\mathbf{x}^i_1, \cdots, \mathbf{x}^i_n\}$ and $\mathbf{X_j} =\{\mathbf{x}^i_1, \cdots, \mathrm{x}^i_n\}$, represent sets of $n$ samples drawn of classes $i$ and $j$ respectively. The approximated probability error is defined as follows:
\begin{align*}
\hat{\mathrm{P}}_{err}(h,i,j) = \frac{1}{2n} \sum_{k=1}^n \left( \mathbf{1}_{{h(x_k^i) < 0}} + \mathbf{1}_{{h(x_k^j) \geq 0}} \right)
\end{align*}
\end{defn}
\section{Methodology}
When trained on a set of classes $C$, a deep learning architecture can learn to distinguish more classes than those present in $C$. Given a new set of classes $C'$, we then define the \textbf{separability} as the number of distinct pairs of $C'$ that the trained architecture separates. In this experimental part, we investigate whether the \textbf{separability} of pretrained architectures, is a relevant indicator of transferability.

\paragraph{Settings:} Let $\varepsilon > 0$ be a scalar, $C$ be a set of $N$ classes, and $P$ be the set of all pairs of classes in $C$. As explained in the theoretical part, we consider a model for each pair of classes. In our experiments, we choose linear hyperplanes--which are typically the final layer (heads) of a neural network--inferred on features provided by a feature extractor to capture high-level information. Thus, following the common approach in classification tasks, we decompose the network into two parts: the feature extractor $f$ and the heads $h$, which are linear layers responsible for decisions.  

\paragraph{Separability:} Given a new set of classes $C'$, composed of $N'$ classes, we evaluate the separability of models trained on pairs $P$ from the set of classes $C$ as follows: we count the number of pairs in $C'$ that are separated by models trained on pairs of $C$ using Algorithm~1. For simplicity of notation we assume $n$ examples per class in the algorithm.

\begin{algorithm}[!ht]
\label{alg:algo}
\DontPrintSemicolon

\KwInput{$\mathbf{F} \in \mathbb{R}^{d_{embd}, n, N'}$: features,\\ trained models: $P = \binom{N}{2}$ $heads$, $\varepsilon$: a scalar.}
\KwOutput{separability: integer between $0$ and$\binom{N'}{2}$.}

$\mathbf{D} \leftarrow \text{heads}(\mathbf{F}) \in \mathbb{R}^{n, N', P}$ \tcp*{predictions}
$idx\_pair \leftarrow 0$\;
$\mathbf{S} \leftarrow zeros(\binom{N'}{2},P) \in\mathbb{R}^{\binom{N'}{2},P} $ \;
\tcp*{iterating over the pair of $C'$}
\For{$i \leftarrow 1$ \KwTo $N'$ }{
         
	\For{$j \leftarrow 1$ \KwTo $N'$} {
 \tcp*{average signs of prediction}
		$\text{\textbf{sign}}\_i \leftarrow \text{\text{\textbf{sign}}}(\text{avg}
  (\mathbf{D}[:, i, :]),\text{dim} = 0))$\;
		$\text{\textbf{sign}}\_j \leftarrow -1 \times \text{\text{\textbf{sign}}}(\mathbf{D}[:, j, :]) \in \mathbb{R}^{P}$\;
  \tcp*{concatenates labels}
		$\mathbf{D}\_{\text{true}} \leftarrow 
  [\text{\text{\textbf{sign}}\_i} \times \text{ones}(n, 1),\text{\text{\textbf{sign}}\_j} \times \text{ones}(n, 1)] \in \mathbb{R}^{2n, P}$\;
        $\mathbf{D}_\_i\_j \leftarrow [\mathbf{D}[:, i, :], \mathbf{D}[:, j, :]$\;
        \tcp*{accuracy of each head}
		$\text{\textbf{acc}}\_i\_j \leftarrow \frac{1}{2n}\sum_k \left[\mathbf{D}_\_i\_j] == \mathbf{D}\_{\text{true}}\right][k, :] $\;
  \tcp*{separability of each head}
		$\mathbf{S}[idx\_pair, :] \leftarrow 1-
  \text{\textbf{acc}}\_i\_j < \varepsilon$\;
		$idx\_pair \leftarrow idx\_pair + 1$\;
	}
}

$\text{separability} \leftarrow \sum \max(\mathbf{S}, \text{dim} = 1)$

\caption{Compute the \textbf{Separability} of P models (heads), on a set $C'$ of N' classes.}
\end{algorithm}

\paragraph{Estimating the generalization potential of a set of classes} 
Consider two sets of classes $C$ on which a model is trained and $C'$ on which it's evaluated. To assess whether $C$ is a good set to generalize on $C'$, we use \textbf{a pretrained feature extractor} to represent the data. We then train one head per pair on this representation, assigning a binary label to each pair, with one class labeled 0 and the other labeled 1. The loss is computed using binary cross-entropy for each head. During the training process, the feature space is frozen, and only the heads' parameters are updated.

Finally, following Algorithm~1, we compute for this architecture the \textbf{separability} on $C'$, which we interpret as an indicator of transferability from $C$ to $C'$.\\

In the experiments, we employ two widely recognized pretrained models: a ResNet-50~\cite{resnet} and a Vision Transformer DINO VIT-8~\cite{vit8}, both were pretrained on the IMAGENET dataset~\cite{imagenet}. Vision Transformer models have been recognized recently for their proficiency in learning generic features through self-supervised learning, while ResNet-50 have served as a standard benchmark in transfer learning~\cite{hintonimagenet}. Moreover, in~\cite{touvronresnet,convnet} the authors have also shown that Residual Networks remain competitive despite the prevalent use of transformers in recent years.

\begin{table*}[h]
  \caption{Correlation between multiple metrics given by a pretrained network and separability when training from scratch.}
  \label{tab:my_label}
  \centering
  \begin{tabular}{lllll}
  \toprule
  Metric & CIFAR10\_4 & CIFAR10\_6 & FASHION & MNIST \\
  \midrule
  Separability & $ 0.68 $ & $0.47$ & $0.86$ & $0.51$\\
  Class Separability & $0.93$ & $0.76$& $0.92$ & $0.67$\\
  Pair separability & $0.90$ & $0.74$ & $0.95$& $0.67$ \\
 \bottomrule
  \end{tabular}
  \label{tab:coefffromscratch}
 \end{table*}

\paragraph{Evaluating the separability of trained architecture} Similar to the previous paragraph, we train a model on $C$, but now the feature extractor is no longer frozen. We either use the same pretrained feature extractor or use a new one that we train from scratch. Then we evaluate again the separability on $C'$. Hopefully, the separability given by the pretrained network and these networks will be correlated.

We anticipate that in the case of finetuning, the separabilities will serve as relevant performance indicators since the same network are used for computing the two separabilities: before and after finetuning. However, when it comes to training from scratch, the relationship between the separabilities becomes more uncertain since the feature extractor are different. It is difficult to predict whether separabilities will be reliable indicators of performance.

\section{Experiments}
We conduct experiments on both classification and few-shot learning datasets with the aim of investigating the sets of classes that lead to the best generalizations.

For classification experiments, let $C'$ denote the set of classes within a dataset. Our goal is to train or finetune networks on subsets $C$ of classes (where $C \subset C'$) and identify the subsets that enable an accurate classification of most classes in $C'$. irst, as described in the methodology, we compute the separability achieved on $C'$ by a pretrained architecture (feature extractor frozen, heads trained on pairs of $C$). Hopefully, the observed separability will serve as a relevant indicator of transferability to $C'$. Second, we compute the separability using architectures that are fully finetuned or trained from scratch on $C$. Finally, we verify whether this separability  (post-training) aligns with the separability computed with the pretrained networks.

In the context of Few-Shot learning experiments, the computations are similar, but now $C'$ corresponds to novel classes and $C$ corresponds to base classes of the datasets.

\subsection{Finetuning}
In this experiment, we use a pretrained network and identify the promising subsets of classes for finetuning using our methodology. We first decide to focus the experiments on the dataset of CIFAR10~\cite{cifar10} since it is a standard benchmark in computer vision and it is composed of $N = 10$ classes, allowing us to consider various size of subsets. Other datasets with larger number classes will make practically impossible to compute all the possible subsets. Statically we need a great number of experiments on different subsets to highlight trends which will not be possible on huge dataset where a single training takes days. 

We generate all the possible subsets consisting of $n \in \{2,4,6,8\}$ classes. We refer to CIFAR\_$n$ when the dataset is restricted to $n$ classes. As described in the methodology section, for each subset of $n$ classes we compute \textbf{separability} before finetuning and after finetuning. For subsets of sizes $n=4$ or $6$ this includes $210$ possibilites.

In Appendix, we report the correlation between the two metrics as well as the slope and intercept of the regression line for the VIT-8 and Resnet50. We note that in both cases, the correlation is high indicating that separability before finetuning is a relevant indicator of finetuning performance. 

We also conduct an analysis of the classes that have the most impact on the finetuning performance. To achieve this, we calculate what we call the class separability $sep(I)$: this involves considering all runs where a class $I$ is selected and then averaging the achieved separability of the runs (detailed computation is in the Appendix). The values of this metric are presented in Appendix. A similar Figure is also given when Training From Scratch (See Figure~\ref{fig:separabilityfromscratchvit8cifar6})
both before and after the finetuning process. Notably, we observe a high correlation between class separability before and after finetuning on CIFAR10. With the Resnet50 model, the correlation is $0.90$ on CIFAR\_4 and $0.86$ on CIFAR\_6, while with the VIT-8 model, it is $0.96$ and $0.95$ respectively.

Thus, we can confidently identify the classes that should lead to the best finetuning performance. For $n=6$, we read the classes ``ship'', ``truck'', ``bird'', ``automobile'' and ``airplane''. These classes are actually included in the subset of classes that lead to the best finetuning performance (within the 210 runs computed): (``bird'', ``cat'', ``deer'', ``dog'', ``horse'', ``ship''). On CIFAR\_4 the best subset is (``deer'', ``dog'', ``horse'', ``ship''). Animal classes seem to be the ones leading to the best generalization, although optimal subsets always contain at least one non animal class.

\begin{figure}[h]
  \centering
  \resizebox{0.4\textwidth}{!}{
\begin{tikzpicture}
\begin{axis}[
  xlabel={Number of Pairs Separated by pretrained network},
  ylabel={Number of Pairs Separated after training },
  xmin=33.5,  
  xmax=37,  
  ymin=35,  
  ymax=37.2,   
  minor x tick num=1, 
  minor y tick num=1, 
  grid=both,
  height=8cm,
  width = 8cm,
  every tick/.style={black},
  clip = false,
  clip mode=individual,
  nodes near coords=\empty, 
  visualization depends on={y \as \labelY},
  nodes near coords style={anchor=center, font=\tiny, inner sep=2pt},
  legend style={font=\small,    
  at={(0.8, 0.8)}, 
   anchor=north},
]

\addplot[only marks, mark=*, blue] coordinates {(33.94047546386719, 35.27381134033203)};
\addplot[only marks, mark=*, green] coordinates {(33.83333206176758, 35.08333206176758)};
\addplot[only marks, mark=*, red] coordinates {(35.44047546386719, 36.77381134033203)};
\addplot[only marks, mark=*, yellow] coordinates {(34.91666793823242, 36.02381134033203)};
\addplot[only marks, mark=*, magenta] coordinates {(36.14285659790039, 37.02381134033203)};
\addplot[only marks, mark=*, cyan] coordinates {(35.28571319580078, 36.57143020629883)};
\addplot[only marks, mark=*, black] coordinates {(35.42856979370117, 36.35714340209961)};
\addplot[only marks, mark=*, olive] coordinates {(35.03571319580078, 36.33333206176758)};
\addplot[only marks, mark=*, teal] coordinates {(34.154762268066406, 35.488094329833984)};
\addplot[only marks, mark=*, purple] coordinates {(34.154762268066406, 35.4523811340332)};

\legend{airplane, automobile, bird, cat, deer, dog, frog, horse, ship, truck}

\end{axis}
\end{tikzpicture}}
  \caption{Comparison of the number of pairs separated by each class. The x-axis represents the results from the Pretrained VIT-8, while the y-axis corresponds to the Resnet18 trained from scratch on 6 classes of CIFAR10. The analysis highlights the most promising classes, such as deer, frog, horse, and cat. Interestingly, some of these promising classes are also part of the best subset when training from scratch (automobile, bird, cat, deer, horse, truck).}
  \label{fig:separabilityfromscratchvit8cifar6}
\end{figure}
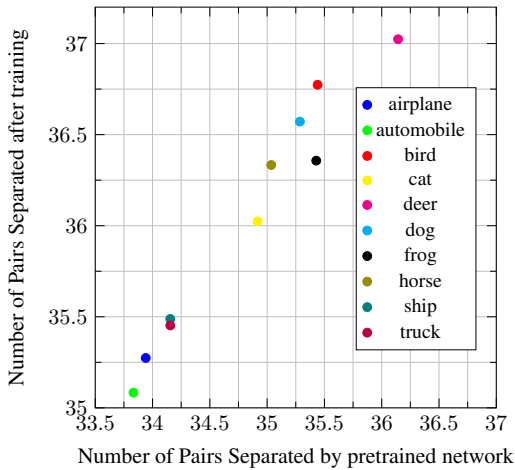
It is worth pointing out that there are slight differences between the graphs given by the VIT-8 and the Resnet50 (c.f. Appendix). Differences could be explained by the training procedure: VIT-8 is a recent architecture trained using a contrastive loss function and benefits from various data augmentation techniques and recent regularizations.

Additionally, we compute the same metrics by considering pairs instead of classes. Once again, we observe an high correlation on CIFAR\_6 and CIFAR\_4. Detailed correlations are provided in the Appendix. Interestingly, in both settings, the pair that achieved the worst separability, i.e. the pair that separates the least pairs of classes, is dog vs. cat. This observation may suggest that the learned hyperplane is overly specific to this particular pair. The best pair of classes turns out to be airplane/bird.

\subsection{Training From Scratch}
In these experiments we investigate whether we could identify the subset leading to the best generalization (in the sense of transfer) when we train a network from scratch.

We use a Resnet18~\cite{resnet} which is a standard and competitive architecture in Computer Vision~\cite{touvronresnet}. On CIFAR10 we trained on subsets of $n=4$ and $n=6$ classes. Again we use either the pretrained Resnet50 or VIT-8 to compute a representation of the data in order to compute the separability of the subsets before training. Contrary to finetuning, it is uncertain that the separability given by the pretrained network will be a good indicator of the separability after training since the network are not trained on the same dataset and they differ in terms of architecture and training procedure.

However, as reported in Table~\ref{tab:coefffromscratch}, the pretrained networks still serve as relevant indicators of performance, with high correlation coefficients, especially for the class separability. Similarly to what we did in finetuning, we plot the mean separability of each class on Figure~\ref{fig:separabilityfromscratchvit8cifar6}, providing insights into identifying the most promising classes. Interestingly, compared to finetuning, the optimal subsets showed small differences. For example, for $n=6$, the optimal subset is (automobile/bird/cat/deer/horse/truck). As depicted on Figure~\ref{fig:separabilityfromscratchvit8cifar6} most of these classes are considered as considered promising. We note few differences with finetuning.

We carry out a similar experiment on FASHION-MNIST~\cite{fashion} also composed of 10 classes. The class ``Dress'' appears as the most promising (see Appendix), which makes sense as this class differs from the other clothing classes. Remarkably on both dataset, none of the models learned from this optimal subset of classes are equivalent.

\subsection{Few-Shot Learning}
In this study, we conduct experiments in the domain of few-shot learning, which serves as a typical scenario to evaluate a model's ability to generalize to new classes. Specifically, we focus on inductive few-shot learning, where the predictions are solely based on the few labeled examples. To assess the performance, we employ an nearest mean classifier (NCM) approach, which is commonly employed in the context of inductive few-shot learning~\cite{nncfewshot}. 

In line with our approach in other sections, we assess the separability of novel classes using representations obtained from a pretrained network, specifically Resnet-50. However, in this case, the separability computation focuses solely on models inferred from pairs of base classes, as few-shot learning exclusively trains on these classes. For each dataset, we identify potential best and worst sets of pairs.

Once more, it is uncertain whether the separability provided by the pretrained network serves as a reliable indicator for few-shot learning. During evaluation, we consider other networks that are trained solely on the base classes, and few-shot evaluation differs significantly from standard classification. 
To conduct our experiments, we utilize the networks provided by~\cite{easy}. We conduct our experiments on standard few-shot learning datasets, including MINIIMAGENET~\cite{miniimagenet}, TIEREDIMAGENET~\cite{tieredimagenet}, FC100~\cite{fc100}, and CIFAR-FS~\cite{cifarfs}, which are widely recognized benchmarks in the field. To assess the performance, we utilize the Nearest Class Mean (NCM) approach on both the best and worst sets of class pairs under the 1-shot learning scenario, where each novel class has only a single labeled example. Additionally, we establish a baseline by reporting the performance on all class pairs. To ensure rigorous results, we conduct 10000 runs for each case and compute the average accuracy and confidence interval (95\%) across these runs.
\\

As shown in Table~\ref{tab:fewshot}, the worst sets consistently yield lower performance compared to both the baseline and the best sets, across most datasets except for TIEREDIMAGENET. Notably, the best sets consistently outperform other sets across all datasets. Specifically, on MINIIMAGENET, the most challenging pairs include ``crab" vs ``ants," ``lion" vs ``bus," and ``Siberian Husky" vs ``Dalmatian". We provide visual examples of these challenging pairs in Appendix. Upon examination of the visual examples, it becomes apparent that ``crab" and ``ants" share similar colors, textures, and structural patterns. Similarly, the classes ``lion" and ``bus" are represented with a similar background of grass. Additionally, the classes ``Dalmatian" and ``Husky" are both dog breeds, with similar color patterns and texture, making them harder to distinguish.

 \begin{table}[t]
  \caption{Accuracy averaged over 10000 runs in Inductive Few Shot Learning (2-Ways, 1-Shot 15-Queries).}
  \label{tab:my_label}
  \centering
  \begin{tabular}{llll}
  \toprule
   Dataset & Best & Worst & Base \\
  \midrule
  MINI-I&$ 0.94 \pm 0.12$ & $0.86 \pm 0.12$ &$ 0.87 \pm 0.11 $\\
  CIFARFS & $ 0.87 \pm 0.11 $ & $0.83 \pm 0.15$& $0.85 \pm 0.14$ \\
  FC100 & $0.74 \pm 0.10$ & $0.68 \pm 0.14$ & $0.70 \pm 0.15 $\\
  TIERED-I & $0.95 \pm 0.07$ & $0.94 \pm 0.07$ & $0.95 \pm 0.07$\\
 \bottomrule
  \end{tabular}
  \label{tab:fewshot}
 \end{table}

\section{Limitations}
Our proposed framework comes with limitations. The framework focuses on models based on pairs, and in practice, we potentially overlook interactions that may emerge when subsets encompass multiple pairs. However, it's worth noting that our theoretical framework could be easily extended to include subsets of several pairs, instead of singleton of pair, offering a potential solution to overcome this limitation. Furthermore, our assertion about identifying promising subsets during training from scratch might be overly ambitious and constrained, given that the effectiveness of subset identification  relies on pretrained networks.

\section{Further Resarch}
Future research involves using our framework to explore various aspects of transfer learning. For example, the impact of training procedures impact on inferred representations and transferability. By systematically varying settings like data augmentation, loss functions, and optimization algorithms, researchers can gain insights into how these factors affect pretrained models' generalization capabilities, potentially leading to more robust models.

Additionally, our formalism opens up possibilities for constructing challenging few-shot learning datasets. By carefully designing subsets of classes with specific characteristics, such as similarity, diversity, or hierarchical relationships, researchers can create benchmarks that thoroughly assess the generalization capabilities of few-shot learning algorithms.

\section{Conclusion}
This paper addressed the concept of transfer learning in classification tasks by introducing a novel theoretical framework. We define an order relationship of learning, allowing us to precisely define transferability between different sets of classes. By conducting experiments on multiple dataset, we demonstrated that the separability achieved by classes before fine-tuning is a strong indicator of the transferability potential during the fine-tuning process. This finding opens up new possibilities for predicting model performance and optimizing transfer learning practices.

Additionally, we explored training from scratch and few-shot learning scenarios, finding that the separability provided by pretrained networks remains relevant for assessing future performance. Overall, this work contributes to a deeper understanding of transfer learning mechanisms and paves the way for enhanced model generalization and transferability in various real-world applications.

The proposed framework opens up new research possibilities, enabling the study of the impact of class sampling, architectures, and training procedures on representations and transferability. 

{
    \small
    \bibliographystyle{ieeenat_fullname}
    \bibliography{main}
}

\clearpage
\setcounter{page}{1}
\maketitlesupplementary

\section{Proofs}
\begin{proof} of Theorem 1.
 Let us consider such a set of fundamental pairs $P$ and a set of separators $S$ of cardinal $\mathcal{F}(C)$.\\

 Let us consider $p$ an element of $P$. By construction of $P$, $p$ is separated by a single separator of $S$. Otherwise the element would have an equivalent leading to a contradiction. Hence the cardinal of $P$ is smaller than or equal to the cardinal of $S$.\\
 
 Conversely, let us consider a separator $M \in S$. We show it separates one single fundamental pair. By definition of $P$, the following set: $P \cap \mathcal{C}(M)$ has no more than one element. By contradiction, let us assume that this set is empty, i.e, $\mathcal{C}(M)$ contains only pairs which are not fundamental. So, for any element $c \in \mathcal{C}(M)$, there is a fundamental pair $p \in P$ s.t $c < p$. But $p$ is necessarily separated by a separator $M'\in S$ and consequently $c$ is also separated by $M'$. At the end, we have two separators $M,M'$ for $c$. Since the reasoning is true for any $c \in \mathcal{C}(M)$, then $S\setminus M$ separates all pairs of $C$. Hence a contradiction with the cardinal of $S$ being equal to $\mathcal{F}(C)$. Therefore, each model of $S$ separates a single fundamental pair; the cardinal of $S$ is smaller or equal to the cardinal of $P$. In conclusion, the cardinal of $P$ equals $\mathcal{F}(C)$.
\end{proof} 
\begin{proof}. We provide on example reaching the upper bound on $\mathcal{F}(C)$.
Let be $n$ an even number, we define $C = \{\{1,2 \},\{3,4\}, \cdots, \{n-1,n \} \}$ composed of $n$ sets. Let us consider a pair $I,J \subset C$. Without loss of generality, we assume that $I<J$. We define the model $M_{I,J}$ a biparition $(M^1_{I,J},M^2_{I,J})$ as follows:
\begin{align*}
 I &\subset M^1_{I,J} \text{ and } J\subset M^2_{I,J},\\
 \forall X & \in \{1, \cdots, n\} \setminus(I\cup J), \ X \mod 2 = 0 \Rightarrow X \subset M^1_{I,J},\\
 \forall X & \in \{1, \cdots, n\}\setminus(I\cup J), \ X \mod 2 = 1 \Rightarrow X \subset M^2_{I,J}.\\
\end{align*}
 We have constructed $\binom{n}{2}$ models and by definition none of the models are equivalent.
\end{proof}
 \begin{proof} We provide one example reaching the higher bound. Let be $n$ an even number, we define $C = \{\{1,2 \},\{3,4\}, \cdots, \{n-1,n \} \}$ composed of $n$ sets. Let us consider a pair $I,J \subset C$. Without loss of generality, we assume that $I<J$. We define the model $M_{I,J}$ a biparition $(M^1_{I,J},M^2_{I,J})$ as follows:
\begin{align*}
 I &\subset M^1_{I,J} \text{ and } J\subset M^2_{I,J},\\
 \forall X & \in \{1, \cdots, n\} \setminus(I\cup J), \ X \mod 2 = 0 \Rightarrow X \subset M^1_{I,J},\\
 \forall X & \in \{1, \cdots, n\}\setminus(I\cup J), \ X \mod 2 = 1 \Rightarrow X \subset M^2_{I,J}.\\
\end{align*}
 We have constructed $\binom{n}{2}$ models and by definition none of the models are equivalent.
\end{proof}

\begin{proof} We provide one example that reaches the lower bound.
 Let us consider $C = \{0,1\}^k$ the set of natural integers between $0$ and $2^{k-1}$ written in base $2$ with $k$ digits. For instance, when $k = 2$, $C = \{00,01,10,11\}$. By definition of $C$, $n= 2^k$.\\

 Let us define $k$ models ($M_0, \cdots, M_{k-1})$ such that for $i\in \{ 0, \cdots, k-1\}$ the partition $M^1_k$ contains the numbers with a $k\textsuperscript{th}$ digit equal to $0$ and $M^2_k$ the number with a $k\textsuperscript{th}$ digit equal to $1$.\\
 Let us consider a pair $I,J \in C$, by construction of $C$, $I$ and $J$ differ from one bit at least at position $k$. Thus $I \perp_{M_{k}} J$ and we set $M_{I,J} = M_k$.\\
 Hence $\mathcal{F}(C) = k = log_2(n)$
 \end{proof}
\section{Further Research}
Further research can build upon the presented framework to investigate several intriguing questions and explore various aspects of transfer learning. For instance, the framework can be leveraged to examine the impact of different training procedures on the inferred representations and transferability. By systematically varying the training settings, such as data augmentation techniques, loss functions, and optimization algorithms, researchers can gain deeper insights into how these factors affect the generalization capabilities of pretrained models. Understanding the nuances of training procedures could lead to the development of more robust and transferable models.

Additionally, the formalism introduced in this work opens up possibilities for constructing challenging few-shot learning datasets. By carefully designing subsets of classes with specific characteristics, such as similarity, diversity, or hierarchical relationships, researchers can create benchmarks that thoroughly assess the generalization capabilities of few-shot learning algorithms. These datasets can aid in evaluating the effectiveness of different few-shot learning methods and shed light on the strategies that work best.

Moreover, the framework's potential extends to investigating the transferability of representations across different domains and tasks. Researchers can explore how pretrained models generalize across different datasets, domains. 
\section{Proposed metrics}
Let us explain in details the different metrics that we introduce in the experimental part.

\section{Class separability:}
Let us consider a class $I$, and we will now define and compute its separability as outlined below. We take all the runs, where the class $I$ appears in the selected subset of classes. For each run, we utilize the corresponding model and associated heads—these are all the heads $h$ assigned to pairs that include class $I$. We then consider the smallest error given by the heads for each pair and check if it is smaller than $\varepsilon$.

For instance, let ${K,L}$ be a pair we are considering. The separability $\text{sep}_{K,L}(I)$ is determined as follows:\[ \text{sep}_{K,L}(I) =
\begin{cases} 
0 & \text{if } \min_{j\in \{1,\cdots,n\}} \hat{\mathbf{P}}_{err}(h_{I,K}, K, L) > \varepsilon \\
1 & \text{otherwise.}
\end{cases}
\] 
Finally, we sum the separabilites of all pairs so that: $ \text{sep}(I) = \sum_{I=1}^N \sum_{J=I+1}^N \text{sep}_{K,L}(I)$.
\section{Pair separability:}
We conduct a computation analogous to the one performed for class separability, with a focus on pairs. Let us consider a pair $P$, we proceed as follows. We consider all the runs where the pair $P$ is included in the selected pairs of classes. For each run, we take the corresponding model and the associated head $h$ assigned to the pair $P$. We then count the pairs that can be separated to obtain $\text{sep}(P)$.

\section{Code Appendix}
Below, we provide an overview of the code used for conducting the experiments presented in the paper. Code is available in an archive. Dataset are downloadable with pytorch, except for CUB200 which available here~\url{https://www.vision.caltech.edu/datasets/cub_200_2011/}, MINIIMAGNET and TIEREDIMAGENET available on Kaggle.

\subsection*{Predicting Subset Separability}
\label{subsec:predict}

In the \texttt{predict\_potential\_subsets} folder, you will find the code related to predicting the separability of subsets of classes:

\begin{enumerate}
    \item \texttt{computes\_regressor.py}: This script train linear a regressor (one head per pair) which is later used to assess the separability of subsets of classes.
    \item \texttt{generates\_features.py}: This script generates the features required for computing the separability.
    \item \texttt{computes\_dict\_couples\_head.py}: This script computes a matrix of accuracies $\mathbf{M}$, where $\mathbf{M}[I,J]$ gives the accuracy of the head learned on pair $I$ when separating pair $J$.
\end{enumerate}

\subsection*{Training from Scratch and Few-shot Learning}
\label{subsec:train}

In the \texttt{training\_from\_scratch\_and\_fewshot} folder, we provide the \texttt{accuracy\_scores} matrix that has already been computed. To run the experiments use this scrip \texttt{it\_training\_classes.py}, to train on random subsets of classes. You can manually adjust the dataset and the number of classes.

\subsection*{Few-shot Learning}
\label{subsec:fewshot}

In the \texttt{fewshot} folder, we have included scripts for few-shot learning experiments:

\begin{enumerate}
    \item \texttt{fw\_computes\_best\_worst\_sets.py}: This script computes the best and worst subsets for fewshot.
    \item \texttt{fw\_test.py}: Run this script to launch the few-shot test (NCM). Features are required.
\end{enumerate}

\section{Additional Experiment}
We conduct an additional experiment involving fine-tuning. Specifically, we focus on the fine-grained CUB200 dataset, which consists of 200 distinct bird classes (accessible at: \url{https://www.vision.caltech.edu/datasets/cub_200_2011/}). Our approach includes finetuning the networks on subsets of 60 classes, and we generate 300 distinct subsets.

Furthermore, we present the outcomes of a detailed analysis in Table~\ref{tab:corrfinetuningCUB}, which examines the correlation and slope coefficients of the regression analysis between the separability prior to and following the fine-tuning process for both the VIT-8 and Resnet50 networks on the CUB200 dataset. Notably, we utilized $\varepsilon = 0.25$ to measure separability.

\section{Additional Tables and Figures}
 \begin{table}[H]
\caption{Correlation and Slope Coefficient of the regression between the separability before and after fine-tuning for VIT-8 and Resnet50 on the CUB200 dataset. We use \(\varepsilon = 0.025\) to compute the separability.}

  \label{tab:my_label}
  \centering
  \begin{tabular}{llll}
  \toprule
   Network & Pearson Coeff. & Slope & Intercept \\
  \midrule
  VIT-8 & $ 0.88 $ & $0.60$ & $8090$ \\
  Resnet50 & $0.71$ & $0.63$& $7766$ \\
 \bottomrule
  \end{tabular}
  \label{tab:corrfinetuningCUB}
 \end{table}
 
 \begin{table}[H]
  \caption{Correlation and Slope Coefficient of the regression between the separability before and after finetuning for the pretrained Resnet50.}
  \label{tab:my_label}
  \centering
  \begin{tabular}{llll}
  \toprule
   Dataset & Pearson Coeff. & Slope & Intercept \\
  \midrule
  CIFAR10\_4 & $ 0.76 $ & $0.73$ & $0.89$ \\
  CIFAR10\_6 & $0.79$ & $0.89$& $2.8$ \\
 \bottomrule
  \end{tabular}
  \label{tab:corrfinetuningResnet}
 \end{table}

\begin{table*}[h]
  \caption{Correlation between multiple metrics given by a pretrained network and separability when training from scratch for the VIT-8.}
  \label{tab:my_label}
  \centering
  \begin{tabular}{lllll}
  \toprule
  Metric & CIFAR10\_4 & CIFAR10\_6 & FASHION & MNIST\\
  \midrule
  Separability & $ 0.78 $ & $0.44$ & $0.88$ & $0.58$ \\
  Class Separability & $0.94$ & $0.97$& $0.96$ &$0.73$\\
  Pair separability & $0.92$ & $0.95$ & $0.94$ &$0.68$\\
 \bottomrule
  \end{tabular}
  \label{tab:coefffromscratchvit8}
 \end{table*}
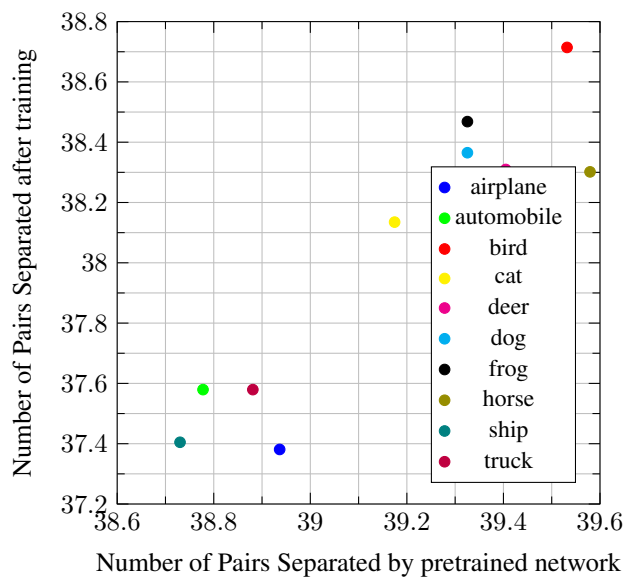
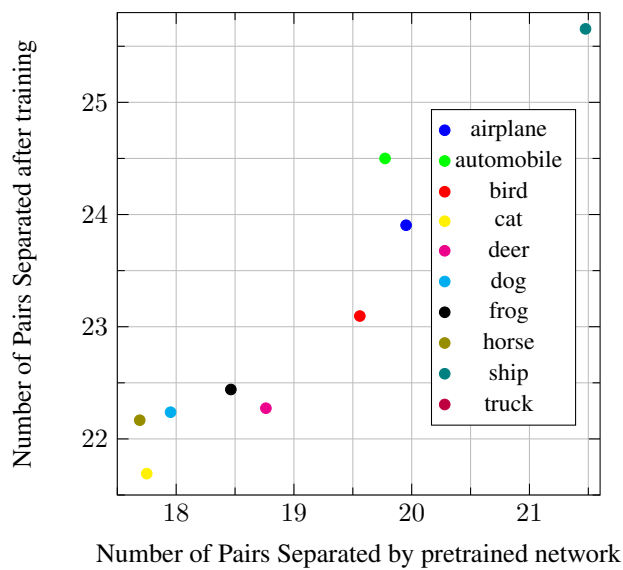
\begin{figure*}[h]
\centering
\begin{subfigure}{\columnwidth}
  \centering
\begin{tikzpicture}
\begin{axis}[
  xlabel={Number of Pairs Separated by pretrained network},
  ylabel={Number of Pairs Separated after training },
  xmin=38.6,  
  xmax=39.6,  
  ymin=37.2,  
  ymax=38.8,   
  minor x tick num=1, 
  minor y tick num=1, 
  grid=both,
  height=8cm,
  width = 8cm,
  every tick/.style={black},
  clip = false,
  clip mode=individual,
  nodes near coords=\empty, 
  visualization depends on={y \as \labelY},
  nodes near coords style={anchor=center, font=\tiny, inner sep=2pt},
  legend style={font=\small,    
  at={(0.8, 0.7)}, 
   anchor=north},
]

\addplot[only marks, mark=*, blue] coordinates {(38.93650817871094, 37.380950927734375)};
\addplot[only marks, mark=*, green] coordinates {(38.77777862548828, 37.57936477661133)};
\addplot[only marks, mark=*, red] coordinates {(39.53174591064453, 38.71428680419922)};
\addplot[only marks, mark=*, yellow] coordinates {(39.17460250854492, 38.13492202758789)};
\addplot[only marks, mark=*, magenta] coordinates {(39.404762268066406, 38.30952453613281)};
\addplot[only marks, mark=*, cyan] coordinates {(39.32539749145508, 38.36507797241211)};
\addplot[only marks, mark=*, black] coordinates {(39.32539749145508, 38.46825408935547)};
\addplot[only marks, mark=*, olive] coordinates {(39.57936477661133, 38.30158615112305)};
\addplot[only marks, mark=*, teal] coordinates {(38.730159759521484, 37.404762268066406)};
\addplot[only marks, mark=*, purple] coordinates {(38.880950927734375, 37.57936477661133)};

\legend{airplane, automobile, bird, cat, deer, dog, frog, horse, ship, truck}

\end{axis}
\end{tikzpicture}
  \caption{Comparison of the number of pairs separated by each class. The x-axis represents the results from the Pretrained Resnet-50, while the y-axis corresponds to the same network finetuned on 6 classes of CIFAR10. The analysis highlights the most promising classes, such as bird, frog, dog and horse.}

\end{subfigure}%
\hfill
\begin{subfigure}{\columnwidth}
  \centering
\begin{tikzpicture}
\begin{axis}[
  xlabel={Number of Pairs Separated by pretrained network},
  ylabel={Number of Pairs Separated after training },
  xmin=17.5,  
  xmax=21.6,  
  ymin=21.5,  
  ymax=25.8,   
  minor x tick num=1, 
  minor y tick num=1, 
  grid=both,
  height=8cm,
  width = 8cm,
  every tick/.style={black},
  clip = false,
  clip mode=individual,
  nodes near coords=\empty, 
  visualization depends on={y \as \labelY},
  nodes near coords style={anchor=center, font=\tiny, inner sep=2pt},
  legend style={font=\small,    
  at={(0.8, 0.8)}, 
   anchor=north},
]

\addplot[only marks, mark=*, blue] coordinates {(19.952381134033203, 23.904762268066406)};
\addplot[only marks, mark=*, green] coordinates {(19.7738094329834, 24.5)};
\addplot[only marks, mark=*, red] coordinates {(19.559524536132812, 23.095237731933594)};
\addplot[only marks, mark=*, yellow] coordinates {(17.75, 21.690475463867188)};
\addplot[only marks, mark=*, magenta] coordinates {(18.761905670166016, 22.2738094329834)};
\addplot[only marks, mark=*, cyan] coordinates {(17.952381134033203, 22.238094329833984)};
\addplot[only marks, mark=*, black] coordinates {(18.464284896850586, 22.440475463867188)};
\addplot[only marks, mark=*, olive] coordinates {(17.690475463867188, 22.16666603088379)};
\addplot[only marks, mark=*, teal] coordinates {(21.4761905670166, 25.654762268066406)};
\addplot[only marks, mark=*, purple] coordinates {(21.14285659790039, 24.75)};

\legend{airplane, automobile, bird, cat, deer, dog, frog, horse, ship, truck}

\end{axis}
\end{tikzpicture}
  \caption{Comparison of the number of pairs separated by each class. The x-axis represents the results from the Pretrained Resnet-50, while the y-axis corresponds to the same network finetuned on 4 classes of CIFAR10. The analysis highlights the most promising classes, such as ship, automobile, airplane, bird.}

\end{subfigure}
  \caption{Separability given by a Resnet-50 before and after finetuning on subsets of CIFAR10.}
  \label{fig:sepclassesfinetuningRes}
\end{figure*}

\begin{figure*}[h]
\centering
\begin{subfigure}{\columnwidth}
  \centering
\begin{tikzpicture}
\begin{axis}[
  xlabel={Number of Pairs Separated by pretrained network},
  ylabel={Number of Pairs Separated after training },
  xmin=41,  
  xmax=42,  
  ymin=40.8,  
  ymax=41.8,   
  minor x tick num=1, 
  minor y tick num=1, 
  grid=both,
  height=8cm,
  width = 8cm,
  every tick/.style={black},
  clip = false,
  clip mode=individual,
  nodes near coords=\empty, 
  visualization depends on={y \as \labelY},
  nodes near coords style={anchor=center, font=\tiny, inner sep=2pt},
  legend style={font=\small,    
  at={(0.8, 0.8)}, 
   anchor=north},
]

\addplot[only marks, mark=*, blue] coordinates {(41.02381134033203, 40.873016357421875)};
\addplot[only marks, mark=*, green] coordinates {(41.17460250854492, 40.97618865966797)};
\addplot[only marks, mark=*, red] coordinates {(41.738094329833984, 41.4603157043457)};
\addplot[only marks, mark=*, yellow] coordinates {(41.873016357421875, 41.595237731933594)};
\addplot[only marks, mark=*, magenta] coordinates {(41.85714340209961, 41.72222137451172)};
\addplot[only marks, mark=*, cyan] coordinates {(41.81745910644531, 41.66666793823242)};
\addplot[only marks, mark=*, black] coordinates {(41.841270446777344, 41.64285659790039)};
\addplot[only marks, mark=*, olive] coordinates {(41.33333206176758, 41.16666793823242)};
\addplot[only marks, mark=*, teal] coordinates {(41.30158615112305, 41.03174591064453)};
\addplot[only marks, mark=*, purple] coordinates {(41.18254089355469, 40.9603157043457)};

\legend{airplane, automobile, bird, cat, deer, dog, frog, horse, ship, truck}

\end{axis}
\end{tikzpicture}
  \caption{Comparison of the number of pairs separated by each class. The x-axis represents the results from the Pretrained VIT-8, while the y-axis corresponds to the same network finetuned on 6 classes of CIFAR10. The analysis highlights the most promising classes, such as truck, dog, frog.}

\end{subfigure}%
\hfill
\begin{subfigure}{\columnwidth}
  \centering
\begin{tikzpicture}
\begin{axis}[
  xlabel={Number of Pairs Separated by pretrained network},
  ylabel={Number of Pairs Separated after training },
  xmin=25,  
  xmax=28,  
  ymin=26,  
  ymax=28.5,   
  minor x tick num=1, 
  minor y tick num=1, 
  grid=both,
  height=8cm,
  width = 8cm,
  every tick/.style={black},
  clip = false,
  clip mode=individual,
  nodes near coords=\empty, 
  visualization depends on={y \as \labelY},
  nodes near coords style={anchor=center, font=\tiny, inner sep=2pt},
  legend style={font=\small,    
  at={(0.8, 0.8)}, 
   anchor=north},
]

\addplot[only marks, mark=*, blue] coordinates {(27.64285659790039, 28.440475463867188)};
\addplot[only marks, mark=*, green] coordinates {(26.154762268066406, 26.845237731933594)};
\addplot[only marks, mark=*, red] coordinates {(25.428571701049805, 26.404762268066406)};
\addplot[only marks, mark=*, yellow] coordinates {(25.202381134033203, 26.119047164916992)};
\addplot[only marks, mark=*, magenta] coordinates {(25.297618865966797, 26.35714340209961)};
\addplot[only marks, mark=*, cyan] coordinates {(25.047618865966797, 26.08333396911621)};
\addplot[only marks, mark=*, black] coordinates {(25.261905670166016, 26.119047164916992)};
\addplot[only marks, mark=*, olive] coordinates {(25.25, 26.154762268066406)};
\addplot[only marks, mark=*, teal] coordinates {(28.0, 28.464284896850586)};
\addplot[only marks, mark=*, purple] coordinates {(26.0, 26.7261905670166)};
;

\legend{airplane, automobile, bird, cat, deer, dog, frog, horse, ship, truck}

\end{axis}
\end{tikzpicture}
  \caption{Comparison of the number of pairs separated by each class. The x-axis represents the results from the Pretrained VIT-8, while the y-axis corresponds to the same network finetuned on 4 classes of CIFAR10. The analysis highlights the most promising classes, such as ship, airplane, automobile and truck.}
 
\end{subfigure}
  \caption{Separability given by a VIT-8 before and after finetuning on subsets of CIFAR10.}
\end{figure*}
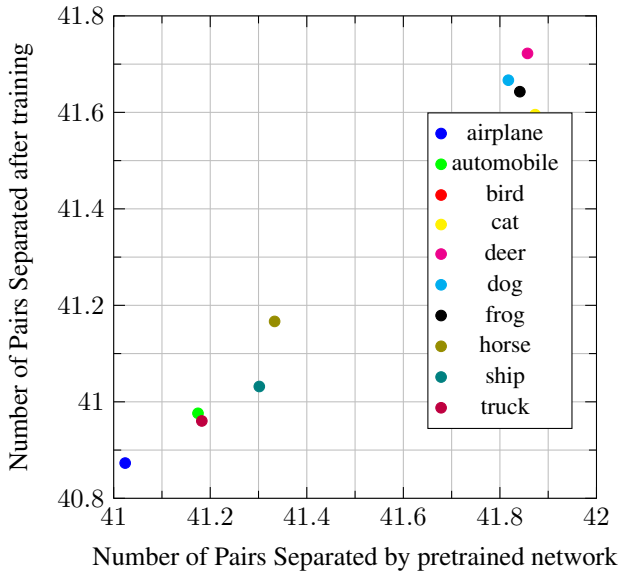
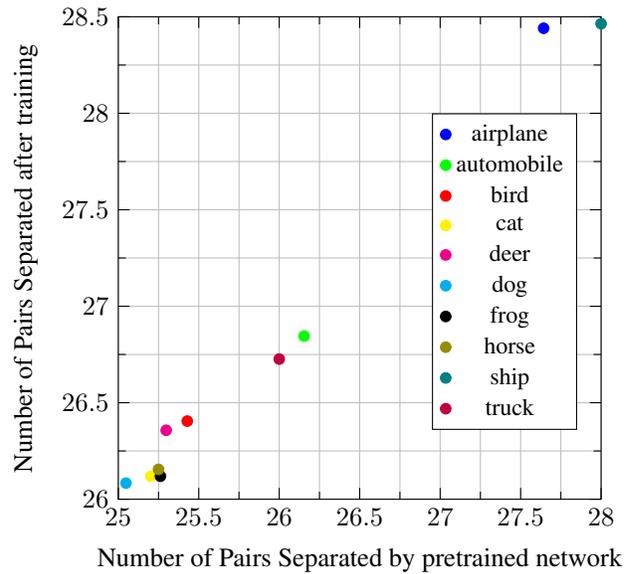

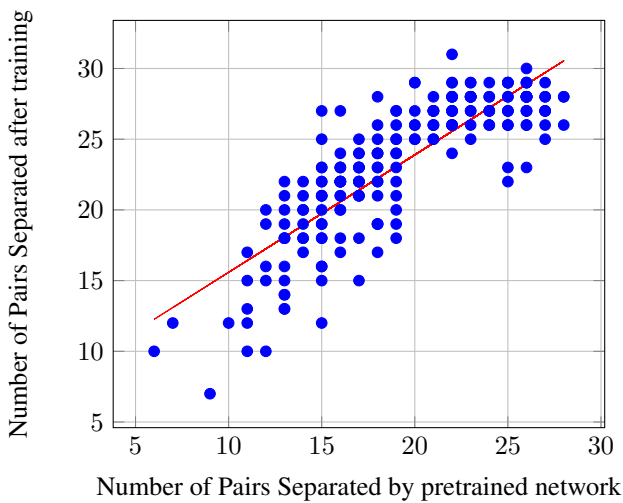
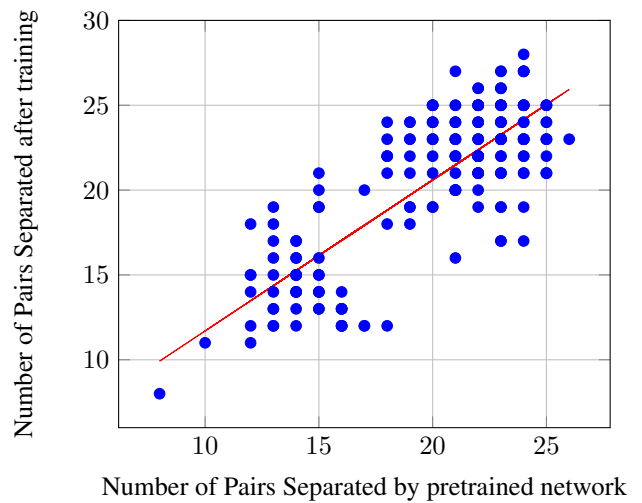
\begin{figure*}[h]
\centering
 \begin{subfigure}{0.45\textwidth}
  \centering
  \begin{tikzpicture}
   \begin{axis}[ xlabel={Number of Pairs Separated by pretrained network},
  ylabel={Number of Pairs Separated after training },    grid=both,height=7cm,]
    \addplot[blue,only marks,mark=*,    ] table [x=x, y=y, col sep=comma]    {CSV/separabilityfinetuning_cifar10_resnet50_4_classes.csv};
    \addplot[smooth,red,] table [x=x, y={create col/linear regression={y=y}}, col sep=comma] {CSV/separabilityfinetuning_cifar10_resnet50_4_classes.csv};
   \end{axis}
  \end{tikzpicture}
  \caption{Subsets of 4 classes of CIFAR10.}
  \label{fig:fig1}
 \end{subfigure}%
 \hfill
 \begin{subfigure}{0.45\textwidth}
  \centering
  \begin{tikzpicture}
   \begin{axis}[ xlabel={Number of Pairs Separated by pretrained network},
  ylabel={Number of Pairs Separated after training },    grid=both,height=7cm,]
    \addplot[blue,only marks,mark=*,    ] table [x=x, y=y, col sep=comma]    {CSV/separabilityfinetuning_cifar10_resnet50_6_classes.csv};
    \addplot[smooth,red,] table [x=x, y={create col/linear regression={y=y}}, col sep=comma] {CSV/separabilityfinetuning_cifar10_resnet50_6_classes.csv};
   \end{axis}
  \end{tikzpicture}
  \caption{Subsets of 6 classes of CIFAR10.}
  \label{fig:fig2}
 \end{subfigure}
 \caption{Resnet50: Number of Pairs Separated before vs after finetuning on subsets of CIFAR10.}
 \label{fig:overall}
\end{figure*}
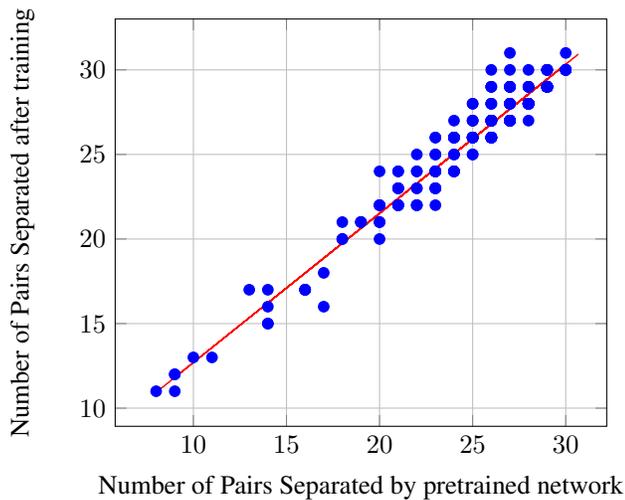
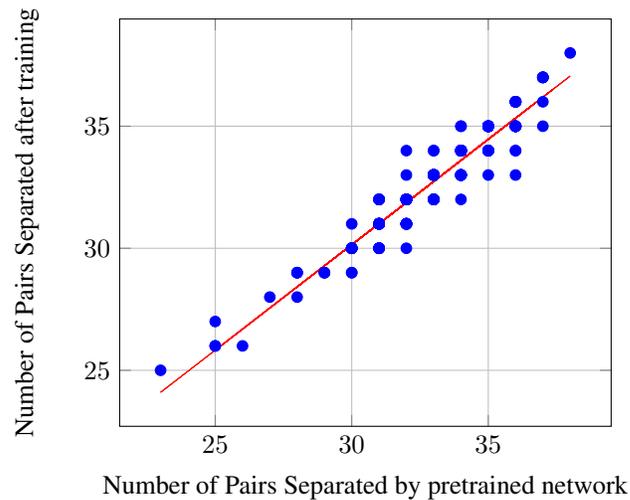
\begin{figure*}[h]
\centering
 \begin{subfigure}{0.45\textwidth}
  \centering
  \begin{tikzpicture}
   \begin{axis}[ xlabel={Number of Pairs Separated by pretrained network},
  ylabel={Number of Pairs Separated after training },    grid=both,height=7cm,]
    \addplot[blue,only marks,mark=*,    ] table [x=x, y=y, col sep=comma]    {CSV/separabilityfinetuning_cifar10_vit8_4_classes.csv};
    \addplot[smooth,red,] table [x=x, y={create col/linear regression={y=y}}, col sep=comma] {CSV/separabilityfinetuning_cifar10_vit8_4_classes.csv};
   \end{axis}
  \end{tikzpicture}
  \caption{Subsets of 4 classes of CIFAR10.}
  \label{fig:fig1}
 \end{subfigure}%
 \hfill
 \begin{subfigure}{0.45\textwidth}
  \centering
  \begin{tikzpicture}
   \begin{axis}[ xlabel={Number of Pairs Separated by pretrained network},
  ylabel={Number of Pairs Separated after training },    grid=both,height=7cm,]
    \addplot[blue,only marks,mark=*,    ] table [x=x, y=y, col sep=comma]    {CSV/separabilityfinetuning_cifar10_vit8_6_classes.csv};
    \addplot[smooth,red,] table [x=x, y={create col/linear regression={y=y}}, col sep=comma] {CSV/separabilityfinetuning_cifar10_vit8_6_classes.csv};
   \end{axis}
  \end{tikzpicture}
  \caption{Subsets of 6 classes of CIFAR10.}
  \label{fig:fig2}
 \end{subfigure}
 \caption{VIT8: Number of Pairs Separated before vs after finetuning on subsets of CIFAR10.}
 \label{fig:overall}
\end{figure*}

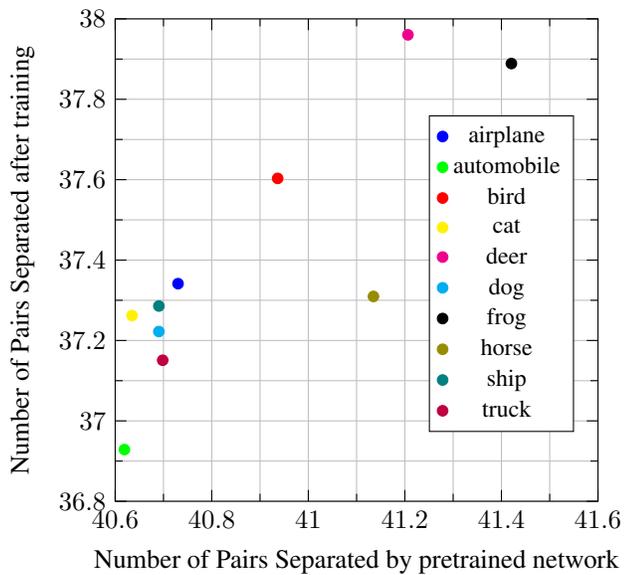
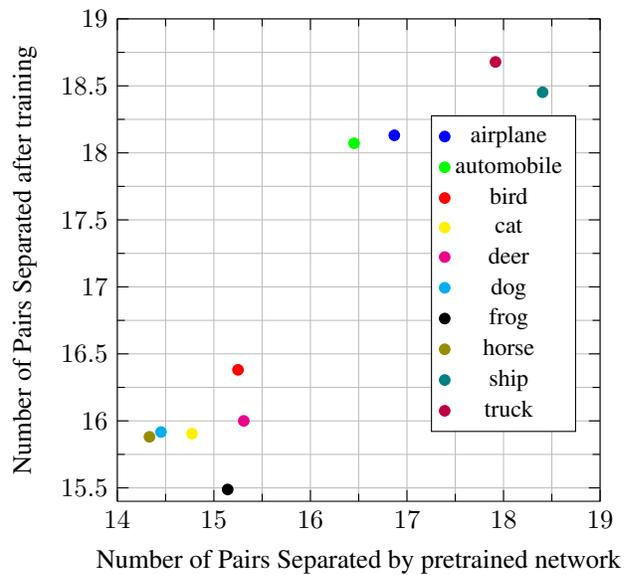
\begin{figure*}[h]
\centering
\begin{subfigure}{\columnwidth}
  \centering
\begin{tikzpicture}
\begin{axis}[
  xlabel={Number of Pairs Separated by pretrained network},
  ylabel={Number of Pairs Separated after training },
  xmin=40.6,  
  xmax=41.6,  
  ymin=36.8,  
  ymax=38,   
  minor x tick num=1, 
  minor y tick num=1, 
  grid=both,
  height=8cm,
  width = 8cm,
  every tick/.style={black},
  clip = false,
  clip mode=individual,
  nodes near coords=\empty, 
  visualization depends on={y \as \labelY},
  nodes near coords style={anchor=center, font=\tiny, inner sep=2pt},
  legend style={font=\small,    
  at={(0.8, 0.8)}, 
   anchor=north},
]

\addplot[only marks, mark=*, blue] coordinates {(40.730159759521484, 37.341270446777344)};
\addplot[only marks, mark=*, green] coordinates {(40.619049072265625, 36.92856979370117)};
\addplot[only marks, mark=*, red] coordinates {(40.93650817871094, 37.60317611694336)};
\addplot[only marks, mark=*, yellow] coordinates {(40.63492202758789, 37.261905670166016)};
\addplot[only marks, mark=*, magenta] coordinates {(41.20634841918945, 37.9603157043457)};
\addplot[only marks, mark=*, cyan] coordinates {(40.69047546386719, 37.22222137451172)};
\addplot[only marks, mark=*, black] coordinates {(41.42063522338867, 37.88888931274414)};
\addplot[only marks, mark=*, olive] coordinates {(41.13492202758789, 37.30952453613281)};
\addplot[only marks, mark=*, teal] coordinates {(40.69047546386719, 37.28571319580078)};
\addplot[only marks, mark=*, purple] coordinates {(40.69841384887695, 37.150794982910156)};

\legend{airplane, automobile, bird, cat, deer, dog, frog, horse, ship, truck}

\end{axis}
\end{tikzpicture}
  \caption{Comparison of the number of pairs separated by each class. The x-axis represents the results from the Pretrained Resnet-50, while the y-axis corresponds to the Resnet18 trained from scratch on 6 classes of CIFAR10. The analysis highlights the most promising classes, such as deer, frog, horse, and bird.}

\end{subfigure}%
\hfill
\begin{subfigure}{\columnwidth}
  \centering
\begin{tikzpicture}
\begin{axis}[
  xlabel={Number of Pairs Separated by pretrained network},
  ylabel={Number of Pairs Separated after training },
  xmin=14,  
  xmax=19,  
  ymin=15.4,  
  ymax=19,   
  minor x tick num=1, 
  minor y tick num=1, 
  grid=both,
  height=8cm,
  width = 8cm,
  every tick/.style={black},
  clip = false,
  clip mode=individual,
  nodes near coords=\empty, 
  visualization depends on={y \as \labelY},
  nodes near coords style={anchor=center, font=\tiny, inner sep=2pt},
  legend style={font=\small,    
  at={(0.8, 0.8)}, 
   anchor=north},
]

\addplot[only marks, mark=*, blue] coordinates {(16.869047164916992, 18.130952835083008)};
\addplot[only marks, mark=*, green] coordinates {(16.452381134033203, 18.071428298950195)};
\addplot[only marks, mark=*, red] coordinates {(15.25, 16.380952835083008)};
\addplot[only marks, mark=*, yellow] coordinates {(14.773809432983398, 15.904762268066406)};
\addplot[only marks, mark=*, magenta] coordinates {(15.309523582458496, 16.0)};
\addplot[only marks, mark=*, cyan] coordinates {(14.452381134033203, 15.916666984558105)};
\addplot[only marks, mark=*, black] coordinates {(15.142857551574707, 15.4880952835083)};
\addplot[only marks, mark=*, olive] coordinates {(14.333333015441895, 15.880952835083008)};
\addplot[only marks, mark=*, teal] coordinates {(18.404762268066406, 18.452381134033203)};
\addplot[only marks, mark=*, purple] coordinates {(17.91666603088379, 18.678571701049805)};
;

\legend{airplane, automobile, bird, cat, deer, dog, frog, horse, ship, truck}

\end{axis}
\end{tikzpicture}
  \caption{Comparison of the number of pairs separated by each class. The x-axis represents the results from the Pretrained Resnet-50, while the y-axis corresponds to the Resnet18 trained from scratch on 4 classes of CIFAR10. The analysis highlights the most promising classes, such as truck, ship, airplane, and automobile.}
\end{subfigure}
  \caption{Separability given by a Resnet50: Training from scratch on subsets of classes of CIFAR10.}
\end{figure*}

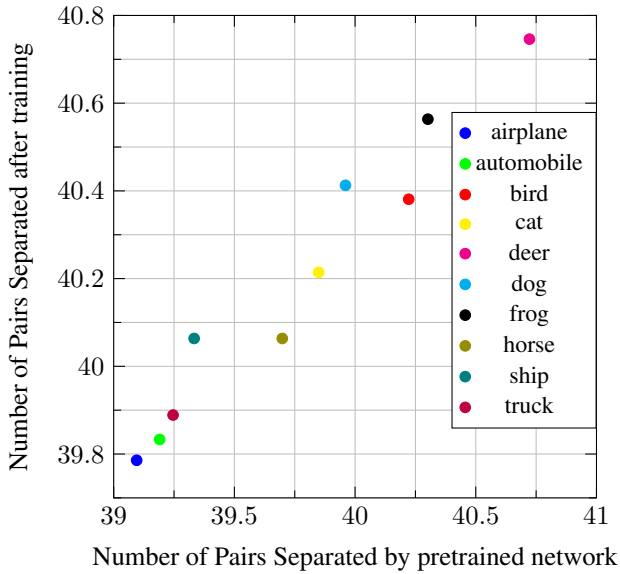
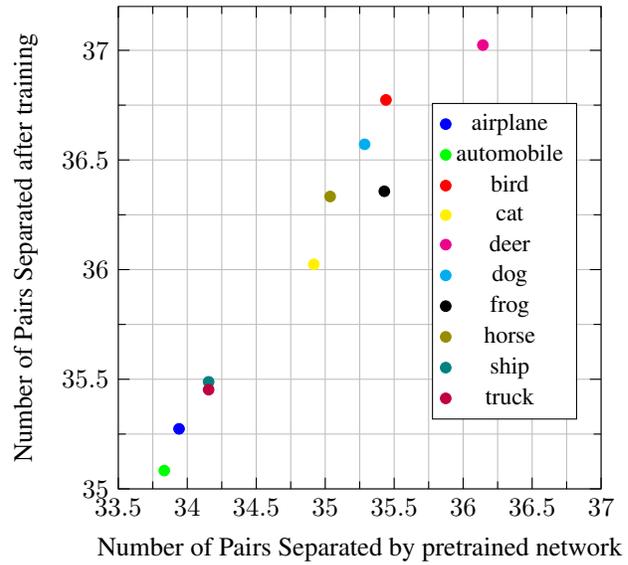
\begin{figure*}[h]
\centering
\begin{subfigure}{\columnwidth}
  \centering
\begin{tikzpicture}
\begin{axis}[
  xlabel={Number of Pairs Separated by pretrained network},
  ylabel={Number of Pairs Separated after training },
  xmin=39,  
  xmax=41,  
  ymin=39.7,  
  ymax=40.8,   
  minor x tick num=1, 
  minor y tick num=1, 
  grid=both,
  height=8cm,
  width = 8cm,
  every tick/.style={black},
  clip = false,
  clip mode=individual,
  nodes near coords=\empty, 
  visualization depends on={y \as \labelY},
  nodes near coords style={anchor=center, font=\tiny, inner sep=2pt},
  legend style={font=\small,    
  at={(0.85, 0.8)}, 
   anchor=north},
]

\addplot[only marks, mark=*, blue] coordinates {(39.095237731933594, 39.78571319580078)};
\addplot[only marks, mark=*, green] coordinates {(39.19047546386719, 39.83333206176758)};
\addplot[only marks, mark=*, red] coordinates {(40.22222137451172, 40.380950927734375)};
\addplot[only marks, mark=*, yellow] coordinates {(39.849205017089844, 40.21428680419922)};
\addplot[only marks, mark=*, magenta] coordinates {(40.72222137451172, 40.74603271484375)};
\addplot[only marks, mark=*, cyan] coordinates {(39.9603157043457, 40.412696838378906)};
\addplot[only marks, mark=*, black] coordinates {(40.30158615112305, 40.56349182128906)};
\addplot[only marks, mark=*, olive] coordinates {(39.69841384887695, 40.06349182128906)};
\addplot[only marks, mark=*, teal] coordinates {(39.33333206176758, 40.06349182128906)};
\addplot[only marks, mark=*, purple] coordinates {(39.24603271484375, 39.88888931274414)};

\legend{airplane, automobile, bird, cat, deer, dog, frog, horse, ship, truck}

\end{axis}
\end{tikzpicture}
  \caption{Comparison of the number of pairs separated by each class. The x-axis represents the results from the Pretrained VIT-8, while the y-axis corresponds to the VIT-8 trained from scratch on 6 classes of CIFAR10. The analysis highlights the most promising classes, such as deer, bird, dog and frog.}

\end{subfigure}%
\hfill
\begin{subfigure}{\columnwidth}
  \centering
\begin{tikzpicture}
\begin{axis}[
  xlabel={Number of Pairs Separated by pretrained network},
  ylabel={Number of Pairs Separated after training },
  xmin=33.5,  
  xmax=37,  
  ymin=35,  
  ymax=37.2,   
  minor x tick num=1, 
  minor y tick num=1, 
  grid=both,
  height=8cm,
  width = 8cm,
  every tick/.style={black},
  clip = false,
  clip mode=individual,
  nodes near coords=\empty, 
  visualization depends on={y \as \labelY},
  nodes near coords style={anchor=center, font=\tiny, inner sep=2pt},
  legend style={font=\small,    
  at={(0.8, 0.8)}, 
   anchor=north},
]

\addplot[only marks, mark=*, blue] coordinates {(33.94047546386719, 35.27381134033203)};
\addplot[only marks, mark=*, green] coordinates {(33.83333206176758, 35.08333206176758)};
\addplot[only marks, mark=*, red] coordinates {(35.44047546386719, 36.77381134033203)};
\addplot[only marks, mark=*, yellow] coordinates {(34.91666793823242, 36.02381134033203)};
\addplot[only marks, mark=*, magenta] coordinates {(36.14285659790039, 37.02381134033203)};
\addplot[only marks, mark=*, cyan] coordinates {(35.28571319580078, 36.57143020629883)};
\addplot[only marks, mark=*, black] coordinates {(35.42856979370117, 36.35714340209961)};
\addplot[only marks, mark=*, olive] coordinates {(35.03571319580078, 36.33333206176758)};
\addplot[only marks, mark=*, teal] coordinates {(34.154762268066406, 35.488094329833984)};
\addplot[only marks, mark=*, purple] coordinates {(34.154762268066406, 35.4523811340332)};

\legend{airplane, automobile, bird, cat, deer, dog, frog, horse, ship, truck}

\end{axis}
\end{tikzpicture}
  \caption{Comparison of the number of pairs separated by each class. The x-axis represents the results from the Pretrained VIT-8, while the y-axis corresponds to the VIT-8 trained from scratch on 4 classes of CIFAR10. The analysis highlights the most promising classes, such as deer, frog, horse, and bird.}
\end{subfigure}
  \caption{Separability given by a VIT-8: Training from scratch on subsets of classes of CIFAR10.}
\end{figure*}

\begin{figure*}[h]
\centering
\begin{subfigure}{\columnwidth}
  \centering
\begin{tikzpicture}
\begin{axis}[
  xlabel={Number of Pairs Separated by pretrained network},
  ylabel={Number of Pairs Separated after training },
  xmin=34.5,  
  xmax=37.5,  
  ymin=33.5,  
  ymax=36.5,   
  minor x tick num=1, 
  minor y tick num=1, 
  grid=both,
  height=8cm,
  width = 8cm,
  every tick/.style={black},
  clip = false,
  clip mode=individual,
  nodes near coords=\empty, 
  visualization depends on={y \as \labelY},
  nodes near coords style={anchor=center, font=\tiny, inner sep=2pt},
  legend style={font=\small,    
  at={(0.85, 0.75)}, 
   anchor=north},
]

\addplot[only marks, mark=*, blue] coordinates {(36.119049072265625, 35.57143020629883)};
\addplot[only marks, mark=*, green] coordinates {(36.345237731933594, 35.82143020629883)};
\addplot[only marks, mark=*, red] coordinates {(36.55952453613281, 36.07143020629883)};
\addplot[only marks, mark=*, yellow] coordinates {(37.17856979370117, 36.46428680419922)};
\addplot[only marks, mark=*, magenta] coordinates {(36.69047546386719, 35.904762268066406)};
\addplot[only marks, mark=*, cyan] coordinates {(34.72618865966797, 34.53571319580078)};
\addplot[only marks, mark=*, black] coordinates {(36.095237731933594, 35.14285659790039)};
\addplot[only marks, mark=*, olive] coordinates {(34.5476188659668, 33.845237731933594)};
\addplot[only marks, mark=*, teal] coordinates {(34.85714340209961, 34.33333206176758)};
\addplot[only marks, mark=*, purple] coordinates {(34.92856979370117, 34.5476188659668)};

\legend{T-shirt/top, Trouser, Pullover, Dress, Coat, Sandal, Shirt, Sneaker, Bag, Ankle boot}
\end{axis}
\end{tikzpicture}
  \caption{Comparison of the number of pairs separated by each class. The x-axis represents the results from the Pretrained Resnet-50, while the y-axis corresponds to the Resnet-50 trained from scratch on 4 classes of FASHION-MNIST. The analysis highlights the most promising classes, such as Dress, Pullover, Coat.}

\end{subfigure}%
\hfill
\begin{subfigure}{\columnwidth}
  \centering
\begin{tikzpicture}
\begin{axis}[
  xlabel={Number of Pairs Separated by pretrained network},
  ylabel={Number of Pairs Separated after training },
  xmin=34,  
  xmax=37,  
  ymin=34.5,  
  ymax=38,   
  minor x tick num=1, 
  minor y tick num=1, 
  grid=both,
  height=8cm,
  width = 8cm,
  every tick/.style={black},
  clip = false,
  clip mode=individual,
  nodes near coords=\empty, 
  visualization depends on={y \as \labelY},
  nodes near coords style={anchor=center, font=\tiny, inner sep=2pt},
  legend style={font=\small,    
  at={(0.85, 0.65)}, 
   anchor=north},
]
\addplot[only marks, mark=*, blue] coordinates {(36.41666793823242, 37.011905670166016)};
\addplot[only marks, mark=*, green] coordinates {(36.55952453613281, 37.0)};
\addplot[only marks, mark=*, red] coordinates {(36.7023811340332, 37.58333206176758)};
\addplot[only marks, mark=*, yellow] coordinates {(36.91666793823242, 37.619049072265625)};
\addplot[only marks, mark=*, magenta] coordinates {(36.71428680419922, 37.28571319580078)};
\addplot[only marks, mark=*, cyan] coordinates {(34.7023811340332, 35.630950927734375)};
\addplot[only marks, mark=*, black] coordinates {(35.46428680419922, 36.130950927734375)};
\addplot[only marks, mark=*, olive] coordinates {(34.82143020629883, 34.761905670166016)};
\addplot[only marks, mark=*, teal] coordinates {(34.630950927734375, 35.369049072265625)};
\addplot[only marks, mark=*, purple] coordinates {(35.404762268066406, 35.654762268066406)};
;

\legend{T-shirt/top, Trouser, Pullover, Dress, Coat, Sandal, Shirt, Sneaker, Bag, Ankle boot}
\end{axis}
\end{tikzpicture}
  \caption{Comparison of the number of pairs separated by each class. The x-axis represents the results from the Pretrained VIT-8, while the y-axis corresponds to the VIT-8 trained from scratch on 4 classes of FASHION-MNIST. The analysis highlights the most promising classes, such as Dress, Pullover, Coat, Trouser and T-shirt.}

\end{subfigure}
  \caption{Training from scratch on subsets of classes of FASHION-MNIST}
\label{fig:sepclassesfinetuning}
\end{figure*}

\begin{figure*}[h]
\centering
\begin{subfigure}{\columnwidth}
  \centering
\begin{tikzpicture}
\begin{axis}[
  xlabel={Number of Pairs Separated by pretrained network},
  ylabel={Number of Pairs Separated after training },
  xmin=28.5  ,  
  xmax=31.5,  
  ymin=33,  
  ymax=36,   
  minor x tick num=1, 
  minor y tick num=1, 
  grid=both,
  height=8cm,
  width = 8cm,
  every tick/.style={black},
  clip = false,
  clip mode=individual,
  nodes near coords=\empty, 
  visualization depends on={y \as \labelY},
  nodes near coords style={anchor=center, font=\tiny, inner sep=2pt},
  legend style={font=\small,    
  at={(0.8, 0.8)}, 
   anchor=north},
]

\addplot[only marks, mark=*, blue] coordinates {(29.14285659790039, 33.75)};
\addplot[only marks, mark=*, green] coordinates {(28.619047164916992, 33.41666793823242)};
\addplot[only marks, mark=*, red] coordinates {(29.547618865966797, 34.53571319580078)};
\addplot[only marks, mark=*, yellow] coordinates {(29.91666603088379, 35.66666793823242)};
\addplot[only marks, mark=*, magenta] coordinates {(30.238094329833984, 34.738094329833984)};
\addplot[only marks, mark=*, cyan] coordinates {(29.64285659790039, 33.78571319580078)};
\addplot[only marks, mark=*, black] coordinates {(29.10714340209961, 34.7023811340332)};
\addplot[only marks, mark=*, olive] coordinates {(31.2261905670166, 35.261905670166016)};
\addplot[only marks, mark=*, teal] coordinates {(29.369047164916992, 34.16666793823242)};
\addplot[only marks, mark=*, purple] coordinates {(30.0, 34.261905670166016)};

\legend{0, 1, 2, 3, 4, 5, 6, 7, 8, 9}
\end{axis}
\end{tikzpicture}
  \caption{Comparison of the number of pairs separated by each class. The x-axis represents the results from the Pretrained Resnet-50, while the y-axis corresponds to the Resnet-50 trained from scratch on 4 classes of MNIST. The analysis highlights the most promising classes, such as $7$ and $4$}
\end{subfigure}%
\hfill
\begin{subfigure}{\columnwidth}
  \centering
\begin{tikzpicture}
\begin{axis}[
  xlabel={Number of Pairs Separated by pretrained network},
  ylabel={Number of Pairs Separated after training },
  xmin=37,  
  xmax=39,  
  ymin=38.2,  
  ymax=39.4,   
  minor x tick num=1, 
  minor y tick num=1, 
  grid=both,
  height=8cm,
  width = 8cm,
  every tick/.style={black},
  clip = false,
  clip mode=individual,
  nodes near coords=\empty, 
  visualization depends on={y \as \labelY},
  nodes near coords style={anchor=center, font=\tiny, inner sep=2pt},
  legend style={font=\small,    
  at={(0.2, 0.8)}, 
   anchor=north},
]

\addplot[only marks, mark=*, blue] coordinates {(37.988094329833984, 38.880950927734375)};
\addplot[only marks, mark=*, green] coordinates {(37.02381134033203, 38.238094329833984)};
\addplot[only marks, mark=*, red] coordinates {(38.33333206176758, 38.869049072265625)};
\addplot[only marks, mark=*, yellow] coordinates {(38.44047546386719, 39.21428680419922)};
\addplot[only marks, mark=*, magenta] coordinates {(38.07143020629883, 39.0476188659668)};
\addplot[only marks, mark=*, cyan] coordinates {(37.92856979370117, 38.72618865966797)};
\addplot[only marks, mark=*, black] coordinates {(38.7976188659668, 39.2023811340332)};
\addplot[only marks, mark=*, olive] coordinates {(38.58333206176758, 39.27381134033203)};
\addplot[only marks, mark=*, teal] coordinates {(37.988094329833984, 38.7976188659668)};
\addplot[only marks, mark=*, purple] coordinates {(37.7023811340332, 38.654762268066406)}; 

\legend{0, 1, 2, 3, 4, 5, 6, 7, 8, 9}
\end{axis}
\end{tikzpicture}
  \caption{Comparison of the number of pairs separated by each class. The x-axis represents the results from the Pretrained VIT-8, while the y-axis corresponds to the VIT-8 trained from scratch on 4 classes of MNIST. The analysis highlights the most promising classes, such $6$, $7$ and $3$}

\end{subfigure}
  \caption{Training from scratch on subsets of classes of MNIST}
\end{figure*}

\begin{figure*}[h]
\centering
 \begin{subfigure}{0.45\textwidth}
  \centering
  \begin{tikzpicture}
   \begin{axis}[ xlabel={Number of Pairs Separated by pretrained network},
  ylabel={Number of Pairs Separated after training },    grid=both,height=7cm,]
    \addplot[blue,only marks,mark=*,    ] table [x=x, y=y, col sep=comma]    {CSV/separability_cifar10_resnet50_4_classes.csv};
    \addplot[smooth,red,] table [x=x, y={create col/linear regression={y=y}}, col sep=comma] {CSV/separability_cifar10_resnet50_4_classes.csv};
   \end{axis}
  \end{tikzpicture}
  \caption{Number of Pairs Separated by pretrained network vs after training from scratch on subsets of 4 classes of CIFAR10. X-axis corresponds to the number of pairs separated by the Pretrained Resnet50.}
  \label{fig:fig1}
 \end{subfigure}%
 \hfill
 \begin{subfigure}{0.45\textwidth}
  \centering
  \begin{tikzpicture}
   \begin{axis}[ xlabel={Number of Pairs Separated by pretrained network},
  ylabel={Number of Pairs Separated after training },    grid=both,height=7cm,]
    \addplot[blue,only marks,mark=*,    ] table [x=x, y=y, col sep=comma]    {CSV/separability_cifar10_resnet50_6_classes.csv};
    \addplot[smooth,red,] table [x=x, y={create col/linear regression={y=y}}, col sep=comma] {CSV/separability_cifar10_resnet50_6_classes.csv};
   \end{axis}
  \end{tikzpicture}
  \caption{Number of Pairs Separated by pretrained network vs after training from scratch on subsets of 6 classes of CIFAR10. X-axis corresponds to the number of pairs separated by the Pretrained Resnet50.}
  \label{fig:fig2}
 \end{subfigure}
 \caption{Training from Scratch on subsets of CIFAR10, Resnet-50 employed to identify promising subsets.}
 \label{fig:overall}
\end{figure*}
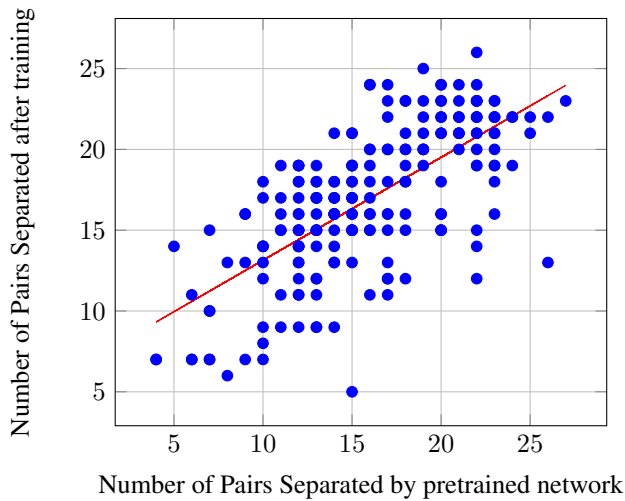
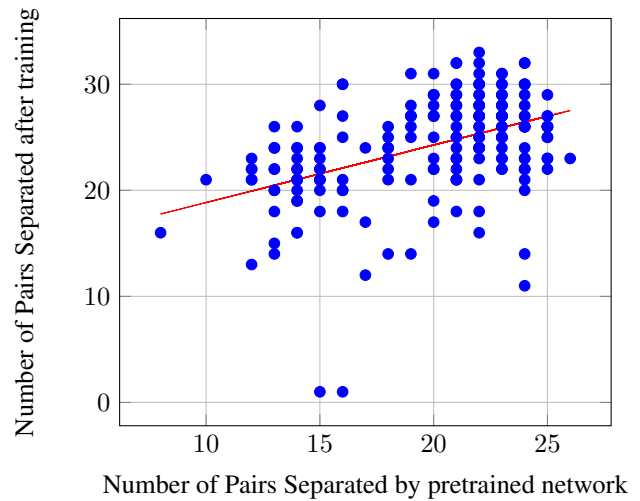

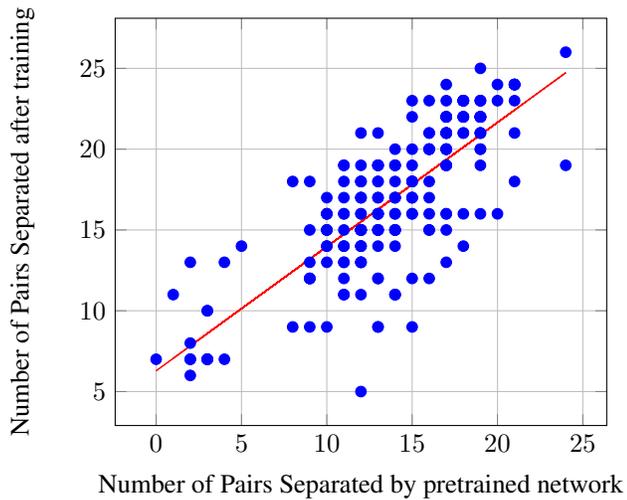
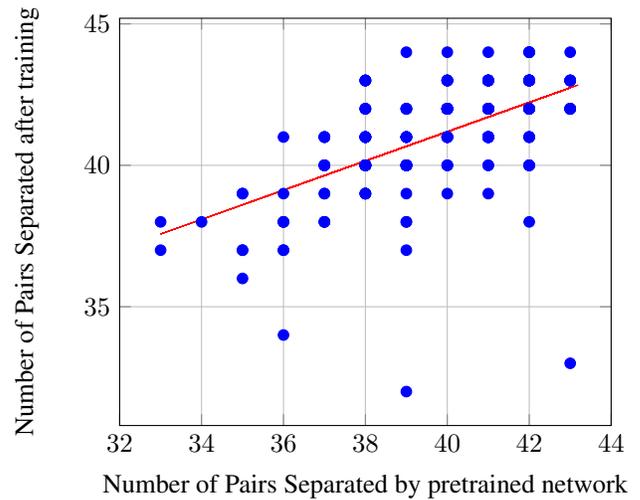
\begin{figure*}[h]
\centering
 \begin{subfigure}{0.45\textwidth}
  \centering
  \begin{tikzpicture}
   \begin{axis}[ xlabel={Number of Pairs Separated by pretrained network},
  ylabel={Number of Pairs Separated after training },grid=both,height=7cm,]
    \addplot[blue,only marks,mark=*,    ] table [x=x, y=y, col sep=comma]    {CSV/separability_cifar10_vit8_4_classes.csv};
    \addplot[smooth,red,] table [x=x, y={create col/linear regression={y=y}}, col sep=comma] {CSV/separability_cifar10_vit8_4_classes.csv};
   \end{axis}
  \end{tikzpicture}
  \caption{Number of Pairs Separated by pretrained network vs after training from scratch on subsets of 4 classes of CIFAR10. X-axis corresponds to the number of pairs separated by the Pretrained VIT-8.}
  \label{fig:fig1}
 \end{subfigure}%
 \hfill
 \begin{subfigure}{0.45\textwidth}
  \centering
  \begin{tikzpicture}
   \begin{axis}[ xlabel={Number of Pairs Separated by pretrained network},
  ylabel={Number of Pairs Separated after training } ,  grid=both,height=7cm,]
    \addplot[blue,only marks,mark=*,    ] table [x=x, y=y, col sep=comma]    {CSV/separability_cifar10_vit8_6_classes.csv};
    \addplot[smooth,red,] table [x=x, y={create col/linear regression={y=y}}, col sep=comma] {CSV/separability_cifar10_vit8_6_classes.csv};
   \end{axis}
  \end{tikzpicture}
  \caption{Number of Pairs Separated by pretrained network vs after training from scratch on subsets of 6 classes of CIFAR10. X-axis corresponds to the number of pairs separated by the Pretrained VIT-8.}
  \label{fig:fig2}
 \end{subfigure}
 \caption{Training from Scratch on subsets of CIFAR10, VIT-8 employed to identify promising subsets.}
 \label{fig:overall}
\end{figure*}

\begin{figure*}[h]
\centering
 \begin{subfigure}{0.45\textwidth}
  \centering
  \begin{tikzpicture}
   \begin{axis}[ xlabel={Number of Pairs Separated by pretrained network},
  ylabel={Number of Pairs Separated after training },grid=both,height=7cm,]
    \addplot[blue,only marks,mark=*,    ] table [x=x, y=y, col sep=comma]    {CSV/separability_fashionmnist_resnet50_4_classes.csv};
    \addplot[smooth,red,] table [x=x, y={create col/linear regression={y=y}}, col sep=comma] {CSV/separability_fashionmnist_resnet50_4_classes.csv};
   \end{axis}
  \end{tikzpicture}
  \caption{Number of Pairs Separated by pretrained network vs after training from scratch. X-axis corresponds to the number of pairs separated by the Pretrained Resnet50.}
  \label{fig:fig1}
 \end{subfigure}%
 \hfill
 \begin{subfigure}{0.45\textwidth}
  \centering
  \begin{tikzpicture}
   \begin{axis}[ xlabel={Number of Pairs Separated by pretrained network},
  ylabel={Number of Pairs Separated after training } ,  grid=both,height=7cm,]
    \addplot[blue,only marks,mark=*,    ] table [x=x, y=y, col sep=comma]    {CSV/separability_fashionmnist_vit8_4_classes.csv};
    \addplot[smooth,red,] table [x=x, y={create col/linear regression={y=y}}, col sep=comma] {CSV/separability_fashionmnist_vit8_4_classes.csv};
   \end{axis}
  \end{tikzpicture}
  \caption{Number of Pairs Separated by pretrained network vs after training from scratch. X-axis corresponds to the number of pairs separated by the Pretrained VIT-8.}
  \label{fig:fig2}
 \end{subfigure}
 \caption{Training from Scratch on subsets composed of 4 classes of FASHION-MNIST.}
 \label{fig:overall}
\end{figure*}

\begin{figure*}[h]
\centering
 \begin{subfigure}{0.45\textwidth}
  \centering
  \begin{tikzpicture}
   \begin{axis}[ xlabel={Number of Pairs Separated by pretrained network},
  ylabel={Number of Pairs Separated after training },grid=both,height=7cm,]
    \addplot[blue,only marks,mark=*,    ] table [x=x, y=y, col sep=comma]    {CSV/separability_mnist_resnet50_4_classes.csv};
    \addplot[smooth,red,] table [x=x, y={create col/linear regression={y=y}}, col sep=comma] {CSV/separability_mnist_resnet50_4_classes.csv};
   \end{axis}
  \end{tikzpicture}
  \caption{Number of Pairs Separated by pretrained network vs after training from scratch. X-axis corresponds to the number of pairs separated by the Pretrained Resnet50.}
  \label{fig:fig1}
 \end{subfigure}%
 \hfill
 \begin{subfigure}{0.45\textwidth}
  \centering
  \begin{tikzpicture}
   \begin{axis}[ xlabel={Number of Pairs Separated by pretrained network},
  ylabel={Number of Pairs Separated after training } ,  grid=both,height=7cm,]
    \addplot[blue,only marks,mark=*,    ] table [x=x, y=y, col sep=comma]    {CSV/separability_mnist_vit8_4_classes.csv};
    \addplot[smooth,red,] table [x=x, y={create col/linear regression={y=y}}, col sep=comma] {CSV/separability_mnist_vit8_4_classes.csv};
   \end{axis}
  \end{tikzpicture}
  \caption{Number of Pairs Separated by pretrained network vs after training from scratch. X-axis corresponds to the number of pairs separated by the Pretrained VIT-8.}
  \label{fig:fig2}
 \end{subfigure}
 \caption{Training from Scratch on subsets composed of 4 classes of MNIST.}
 \label{fig:overall}
\end{figure*}

\begin{figure*}
    \centering
    \begin{subfigure}{0.5\textwidth}
        \centering
        \includegraphics[width=0.8\linewidth, height=0.8\linewidth, keepaspectratio]{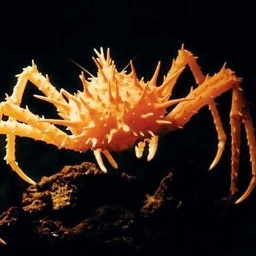}
        \caption{Crab}
    \end{subfigure}%
    \begin{subfigure}{0.5\textwidth}
        \centering
        \includegraphics[width=0.8\linewidth, height=0.8\linewidth, keepaspectratio]{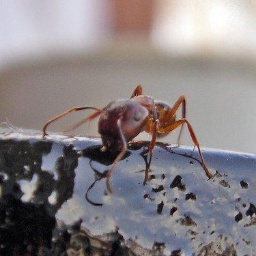}
        \caption{Ants}
    \end{subfigure}
    
    \begin{subfigure}{0.5\textwidth}
        \centering
        \includegraphics[width=0.8\linewidth, height=0.8\linewidth, keepaspectratio]{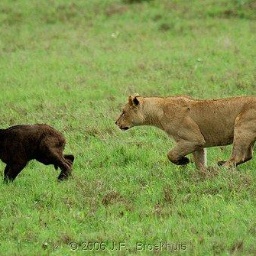}
        \caption{Lion}
    \end{subfigure}%
    \begin{subfigure}{0.5\textwidth}
        \centering
        \includegraphics[width=0.8\linewidth, height=0.8\linewidth, keepaspectratio]{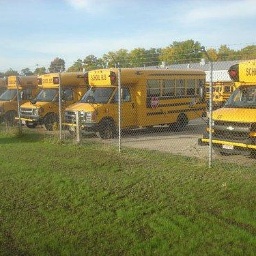}
        \caption{Bus}
    \end{subfigure}
    
    \begin{subfigure}{0.5\textwidth}
        \centering
        \includegraphics[width=0.8\linewidth, height=0.8\linewidth, keepaspectratio]{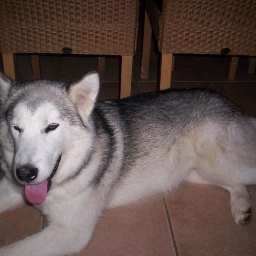}
        \caption{Husky}
    \end{subfigure}%
    \begin{subfigure}{0.5\textwidth}
        \centering
        \includegraphics[width=0.8\linewidth, height=0.8\linewidth, keepaspectratio]{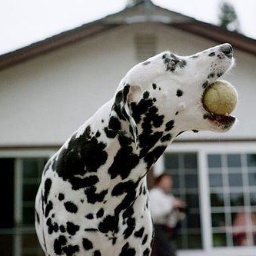}
        \caption{Dalmatian}
    \end{subfigure}
    
\caption{Depiction of the most challenging pairs on TIEREDIMAGENET: Crab vs Ants;  Lion vs Bus; Husky vs Dalmatian.}
    \label{fig:enter-label}
\end{figure*}
\end{document}